\documentclass{article}

\usepackage[preprint]{neurips_2026}

\usepackage[utf8]{inputenc} 
\usepackage[T1]{fontenc}    
\usepackage{hyperref}       
\usepackage{url}            
\usepackage{booktabs}       
\usepackage{amsfonts}       
\usepackage{nicefrac}       
\usepackage{microtype}      
\usepackage{xcolor}         
\usepackage{amsmath,amsthm}
\usepackage{verbatim}
\usepackage{hyperref}
\usepackage{amssymb}
\usepackage{mathrsfs}

\usepackage{microtype}
\usepackage{xcolor}
\usepackage{makecell}

\newtheorem{theorem}{Theorem}
\newtheorem{lemma}[theorem]{Lemma}
\newtheorem{definition}[theorem]{Definition}

\newtheorem{corollary}[theorem]{Corollary}
\newtheorem{example}{Example}
\newtheorem{proposition}[theorem]{Proposition}

\usepackage[ruled,vlined]{algorithm2e}
\DeclareMathOperator{\TV}{TV}
\DeclareMathOperator{\KL}{KL}
\title{A Complete Characterization of Learnability\\ for Adversarial Noisy Bandits}

\author{%
  Steve Hanneke \\
  Department of Computer Science\\
  Purdue University\\
  \texttt{steve.hanneke@gmail.com} \\
  \And
  Kun Wang \\
  Department of Computer Science \\
  Purdue University \\
  \texttt{wangkun8512@gmail.com} \\
}

\begin{document}

\maketitle

\begin{abstract}
We study adversarial noisy bandits given a known function class $\mathcal{F}$. In each round, the adversary selects a function $f \in \mathcal{F}$, the learner chooses an arm, and then observes a noisy reward determined by the chosen arm and the function $f$. The goal is to minimize the cumulative regret $R(T)$, defined as the difference between the learner's performance and that of the best fixed arm in hindsight over $T$ rounds. We say that a function class $\mathcal{F}$ is learnable if there exists an algorithm achieving sublinear regret.  Our main result is a complete characterization of learnability for adversarial noisy bandits. The characterization is given in terms of a convexified variant of the \emph{generalized maximin volume} introduced by \citet{hanneke2025complete}: namely, the generalized maximin volume evaluated on the convex hull \(\operatorname{co}(\mathcal F)\). We prove that \(\mathcal F\) is learnable if and only if this convexified generalized maximin volume is positive at every scale. This condition characterizes learnability against both oblivious and adaptive adversaries, showing in particular that these two notions of learnability are equivalent in the noisy bandit setting. Our analysis reveals that the key complexity measure is closely connected to two new combinatorial notions, \emph{hitting set} and  \emph{distribution covering number}, which may be of independent interest. These results establish the first complete characterization of learnability for adversarial noisy bandits.

\end{abstract}


\section{Introduction}

The multi-armed bandit problem \citep{robbins1952some,auer2002finite,auer2002nonstochastic,bubeck2012regret,slivkins2019introduction,lattimore2020bandit} is a classic sequential decision making problem. In each round, the learner chooses an action (also known as an arm) and observes a reward associated with that arm. The goal of the learner is to choose between multiple arms over time to maximize the cumulative rewards. Two fundamental models have been studied in parallel: the stochastic bandit model and the adversarial bandit model. In this work, we focus on the adversarial bandit setting. 

Compared to the stochastic bandit, the adversarial bandit \citep{auer1995gambling,auer2002nonstochastic} allows rewards to be chosen arbitrarily by an adversary, rather than being sampled from fixed underlying distributions. As a result, it provides a more robust model for non-stationary and potentially malicious environments. For example, consider a customer who repeatedly buys apples from different companies over a long period of time. The quality of the apples from each company may vary due to many factors that change over time, such as seasonality, transportation conditions, and storage quality. In such a situation, a strategy designed to be robust to adversarial or non-stationary environments can gradually achieve better performance over time. Other motivating examples include dynamic pricing, online advertising, and recommendation systems. As noted by \citet{auer2002nonstochastic}, adversarial bandits also have broad applications in playing repeated games.



What we explore in this work is the role of \emph{function class} in this problem. Since the introduction of the adversarial bandit model, there has been a substantial body of work on this problem. However, most existing work either assumes no relationship among the rewards of different arms \citep{auer1995gambling,auer2002nonstochastic,bubeck2012regret,neu2015explore,putta2022scale}, or imposes some particular structural assumption on the reward function, such as linearity \citep{bubeck2014entropic,JMLR:v17:hazan16a,hoeven2018many,neu2020efficient} or Lipschitz continuity \citep{maillard2010online,podimata2021adaptive}. An abstract formulation of the adversarial bandit problem with a function class can be viewed as a special case of the adversarial decision making problem introduced by \citet{foster2022complexity}: there is an arm set $\Pi$. A function $f$ maps arms $\pi \in \Pi$ to the mean reward $f(\pi) \in [0,1]$ of the underlying distributions. A function class is a collection of measurable functions. For any function class $\mathcal{F}$, we consider any distributions supported on $[0,1]$ with a mean reward function $f \in \mathcal{F}$. In other words, we consider \emph{arbitrary noise} in this work.\footnote{
Similar to \cite{hanneke2025complete}, the noise distribution could be extended to binary noise, unbounded noise, and Gaussian noise, etc.} The learning problem induced by the function class $\mathcal{F}$ is as follows: the game proceeds over $T$ rounds. At each round $t$, the adversary chooses a function $f_t$ from the function class $\mathcal{F}.$ The learner then selects an arm $\pi_t \in\Pi$, and receives a noisy reward $r_t$ whose mean is $f_t(\pi_t)$. The objective is to minimize regret $\sup_{\pi^*}\sum_{t=1}^T (f_t(\pi^*)-f_t(\pi_t))$, which measures the gap between the learner’s cumulative expected reward and that of the best fixed arm in hindsight over $T$ rounds. In this work, we consider worst-case regret guarantees. For a learning algorithm $A$ and a class of adversaries $\mathfrak{A}$, define $R_{A}(T):=\sup_{\mathrm{Adv}\in\mathfrak{A}} \mathbb{E}_{A,\mathrm{Adv}}\left[\sup_{\pi^*}\sum_{t=1}^T (f_t(\pi^*)-f_t(\pi_t))\right]$. We say a function class $\mathcal{F}$ is learnable against adversaries in $\mathfrak{A}$ if there exists a learning algorithm $A$ such that $R_{A}(T)=o(T)$. Learnability depends on the complexity of the function class. This leads to the central question of this work:
\begin{center}
\emph{Which function class $\mathcal{F}$ is learnable in the adversarial bandit setting?}
\end{center}
Interestingly, a long line of recent literature has investigated learnability in the stochastic bandit setting \citep{amin2011bandits,russo2013eluder,foster2021statistical,foster2023tight,hanneke2023bandit,hanneke2025complete,brukhim2025hardness}. Most recently, \citet{hanneke2025complete} characterize learnability for stochastic bandits using a remarkably simple complexity measure, the \emph{generalized maximin volume}, defined by 
\begin{equation*}
\label{dimension}
    \begin{aligned}
    \gamma_{\mathcal{F},\alpha}=\sup_{p \in \Delta(\Pi)} \inf_{f \in \mathcal{F}} \mathbb{P}_{\pi \sim p}(\sup_{\pi^*} f(\pi^*)-f(\pi)\leq \alpha).
    \end{aligned}
\end{equation*}
By comparison, learnability in the adversarial bandit setting remains much less understood.  \citet{foster2022complexity} study adversarial bandits within a broader framework of adversarial decision-making. They introduce the convexified decision estimation coefficient (DEC), establishing upper and lower bounds in terms of this quantity, though there exists a potentially arbitrarily large gap. A complete characterization of learnability for adversarial bandits is still lacking.



In this work, we completely resolve this question. Our main results are based on a convexified version of the \emph{generalized maximin volume} introduced by \citet{hanneke2025complete}. Specifically, rather than applying this quantity directly to the original function class \(\mathcal F\), we apply it to its convex hull, $\operatorname{co}(\mathcal F)
=
\left\{
\sum_{i=1}^N \lambda_i f_i :
f_i \in \mathcal F,\;
\lambda_i \ge 0,\;
N\geq 1,\;
\sum_{i=1}^N \lambda_i = 1
\right\}$.
We denote the resulting complexity measure by
\(\gamma_{\operatorname{co}(\mathcal F),\alpha}\), and show that it plays a central role in the adversarial bandit learnability problem. Our results can be summarized in three parts. First, we prove that the condition $\gamma_{\operatorname{co}(\mathcal F),\alpha}>0\; \forall \alpha \in (0,1)$ characterizes learnability in adversarial bandits against oblivious adversaries. Second, we prove that this condition continues to characterize learnability against adaptive adversaries both in countable and uncountable arm spaces. Third, and perhaps of independent interest, our analysis reveals connections between \(\gamma_{\operatorname{co}(\mathcal F),\alpha}\), learnability, and two new combinatorial complexity measures: the \emph{hitting set} and the \emph{distribution cover}. We now present these results in detail. Throughout the paper, we assume all functions considered are measurable and take values in \([0,1]\).

\section{Main results}
\label{section2}
\subsection{Learnability against oblivious adversaries}
An adversary is oblivious if she chooses the entire sequence of functions before the game starts, thus independent of the learner’s actions. First, we introduce our characterization of learnability against oblivious adversaries (Theorem \ref{oblivious_adversary}). 
\begin{theorem}
\label{oblivious_adversary}
  $\mathcal{F}$ is learnable against oblivious adversaries if and only if $\gamma_{\operatorname{co}(\mathcal{F}),\alpha}>0\; \forall \alpha \in (0,1)$.
\end{theorem}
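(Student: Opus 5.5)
Both directions go through the quantity $\gamma_{\operatorname{co}(\mathcal F),\alpha}$, used once as a lower-bound obstruction and once as an exploration distribution for an exponential-weights algorithm.

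\textbf{Necessity.} Suppose $\gamma_{\operatorname{co}(\mathcal F),\alpha}=0$ for some $\alpha\in(0,1)$. By definition, for every $p\in\Delta(\Pi)$ and every $\epsilon>0$ there is some $g\in\operatorname{co}(\mathcal F)$ with $\mathbb P_{\pi\sim p}(\sup_{\pi^*}g(\pi^*)-g(\pi)\le\alpha)<\epsilon$. An oblivious adversary can realize any $g=\sum_i\lambda_i f_i$ on average over a block of rounds. It does this by drawing, in each round independently, $f_t=f_i$ with probability $\lambda_i$; this is a randomized oblivious adversary, which suffices by Yao's principle. Against such a stationary mixture, the regret relative to the best fixed arm in hindsight is, in expectation and up to concentration, $T(\sup g-\mathbb E g(\pi_t))$.

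The key step is a minimax argument. I would pass to a finite subclass and finite arm set, which is justified by restricting to $\epsilon$-nets of arms and a finite sub-collection of $\operatorname{co}(\mathcal F)$ that witnesses smallness for the relevant $p$. The game value $\inf_p\sup_{g}\mathbb E_{\pi\sim p}[\sup g-g(\pi)]$ is then at least $\alpha(1-\epsilon)$, since $\gamma=0$. Because the set of $g$ is convex, von Neumann's minimax theorem gives a single $g^\star\in\operatorname{co}(\mathcal F)$ against which every arm distribution suffers gap $\ge\alpha/2$. Here the convex hull is essential: mixing over $g$ stays inside $\operatorname{co}(\mathcal F)$. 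The adversary plays the i.i.d. mixture realizing $g^\star$ for all $T$ rounds, and any learner then has regret $\Omega(\alpha T)$ minus a concentration term of order $\sqrt T$. The noise plays no role here. The main obstacle is the compactness and measurability needed to apply minimax over infinite $\Pi$. I would handle it with a finite-support reduction, or with Sion's theorem on the convex set $\operatorname{co}(\mathcal F)$ against finitely supported $p$ after fixing a finite arm set on which the supremum is approximated.

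\textbf{Sufficiency.} Assume $\gamma_{\operatorname{co}(\mathcal F),\alpha}>0$ for all $\alpha$. Fix $\alpha$ and let $p_\alpha$ be a distribution attaining at least $\gamma_\alpha/2$. First, average the adversary's functions over a block of length $B$; the average lies in $\operatorname{co}(\mathcal F)$. Hence $p_\alpha$ places mass at least $\gamma_\alpha/2$ on arms that are $\alpha$-optimal for that block average. Second, draw $N=O(\gamma_\alpha^{-1}\log(1/\delta))$ candidate arms i.i.d. from $p_\alpha$ before the game. By the above, for every block, with high probability some candidate is $\alpha$-good for that block average. This does not directly give $\alpha$-competitiveness for the whole-horizon average $\bar f_T$, which is also in $\operatorname{co}(\mathcal F)$. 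Applying the property to $\bar f_T$ alone gives, with probability $\ge1-(1-\gamma_\alpha/2)^N$ over the oblivious sequence fixed in advance, a candidate within $\alpha$ of $\sup\bar f_T$.

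Third, run EXP3 over the $N$ candidates. Its regret against the best candidate is $O(\sqrt{TN\log N})$, and noise is harmless because EXP3 needs only unbiased bounded rewards. The total regret is at most $\alpha T+O(\sqrt{TN\log N})+\delta T$. Let $\alpha,\delta\to0$ slowly with $T$, via a doubling trick over epochs, to get $o(T)$. This direction is straightforward; since the adversary is oblivious, the candidate sampling is independent of the sequence, so no block averaging is even needed. The main difficulty of the theorem lies in the necessity minimax step.
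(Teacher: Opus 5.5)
Your sufficiency direction is essentially the paper's: sample $O(\gamma_{\operatorname{co}(\mathcal F),\alpha}^{-1}\log T)$ arms from a near-optimal $p$, use that the oblivious average $\bar f$ lies in $\operatorname{co}(\mathcal F)$, run Exp3, and let $\alpha_T\to 0$ slowly. As you note yourself, the block averaging is unnecessary.

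The necessity direction has a genuine gap at the minimax step. The payoff $(p,g)\mapsto\mathbb{E}_{\pi\sim p}[\sup_{\pi^*}g(\pi^*)-g(\pi)]$ is \emph{convex} in $g$, because $g\mapsto\sup_{\pi^*}g(\pi^*)$ is a supremum of linear functionals. So neither von Neumann's nor Sion's theorem lets you exchange $\inf_p$ and $\sup_g$ over the convex set $\operatorname{co}(\mathcal F)$.

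Worse, the conclusion you draw is false for every function. A point mass on an $\epsilon$-maximizer of $g^\star$ has gap at most $\epsilon$, so $\sup_g\inf_p\mathbb{E}_{\pi\sim p}[\sup_{\pi^*} g(\pi^*)-g(\pi)]=0$ always. Concretely, any learner-independent $g^\star$ is defeated by the learner that always plays an $\epsilon$-maximizer of $g^\star$. Its regret against your i.i.d. mixture is at most $\epsilon T+O(\sqrt{T})$.

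A valid minimax argument would instead give a \emph{prior} over environments. That prior is not an element of $\operatorname{co}(\mathcal F)$: your sentence ``mixing over $g$ stays inside $\operatorname{co}(\mathcal F)$'' conflates randomizing over environments with averaging functions. Even with a prior, you would still have to show the learner cannot identify the environment from its feedback. That is exactly where the noise must enter, so your remark that ``the noise plays no role'' is a symptom of the gap. Without noise, let an arm's reward value encode the index of the optimal arm. This class has $\gamma_{\operatorname{co}(\mathcal F),\alpha}=0$, yet one learner achieves sublinear regret against every i.i.d. mixture. Separately, $\epsilon$-nets of arms are undefined here, since $\Pi$ carries no metric.

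The paper avoids minimax altogether and proves the contrapositive with a learner-dependent construction:
\begin{itemize}
\item Take a learner $A$ and a horizon $T_\alpha$ with $R_A(T_\alpha)\le\alpha T_\alpha/4$. Let $p$ be the law of a uniformly random element among the arms $A$ pulls in $T_\alpha$ rounds when fed i.i.d.\ $\mathrm{Bernoulli}(1/2)$ rewards.
\item Fix any $\bar f=\sum_i\lambda_i f'_i\in\operatorname{co}(\mathcal F)$. Consider the oblivious adversary that draws $f_t=f'_i$ with probability $\lambda_i$ and returns $\mathrm{Bernoulli}(f_t(\pi_t))$ rewards. The regret bound plus Markov's inequality force a uniformly chosen pulled arm to be $\alpha$-optimal for $\bar f$ with probability at least $1/2$.
\item Couple the two runs through shared internal randomness. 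The fair coins coincide with the real rewards with probability exactly $2^{-T_\alpha}$, independently of $\bar f$.
\item This gives $p(\{\pi:\sup_{\pi^*}\bar f(\pi^*)-\bar f(\pi)\le\alpha\})\ge 2^{-T_\alpha-1}$ uniformly over $\operatorname{co}(\mathcal F)$.
\end{itemize}
This coupling through pure-noise feedback is the idea your argument is missing.
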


\subsection{Learnability for countable arm spaces}

An adversary is adaptive if, at each round, she may choose the reward function based on the learner’s past actions and observed rewards. We first study this setting when the arm space $\Pi$ is countable. In countable arm spaces, we uncover a simple but illuminating connection between $\gamma_{\operatorname{co}(\mathcal{F}),\alpha}$ and the existence of a simple combinatorial complexity measure, namely the $\alpha$-hitting set. This connection yields a clean characterization of learnability.

\begin{definition}[Hitting Set]
    A set of arms $\mathcal{H}_{\alpha}\subseteq \Pi$ is an $\alpha$-hitting set for the function class $\mathcal{F}$ if for any function $f \in \mathcal{F}$, there exists an arm $\pi \in \mathcal{H}_{\alpha}$ such that $\sup_{\pi^*} f(\pi^*)-f(\pi)\leq \alpha$. The minimum cardinality of this set $|\mathcal{H}_{\alpha}|$ is the size of the hitting set.
\end{definition}

Our key observation is the following.  
\begin{lemma}
\label{hitting_set_implication}
    For countable arm spaces, if $\gamma_{\mathcal{F},\alpha}>0\;\forall \alpha \in (0,1)$, then $\mathcal{F}$ admits a finite $\alpha$-hitting set  $\forall \alpha \in (0,1)$.
\end{lemma}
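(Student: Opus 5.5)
Since $\Pi$ is countable, enumerate it as $\Pi=\{\pi_1,\pi_2,\dots\}$. For each $f\in\mathcal F$ write $G_\alpha(f)=\{\pi: \sup_{\pi^*}f(\pi^*)-f(\pi)\le\alpha\}$ for its set of $\alpha$-good arms. Fix $\alpha\in(0,1)$ and set $\gamma:=\gamma_{\mathcal F,\alpha}>0$. By definition of the supremum there is a distribution $p\in\Delta(\Pi)$ such that
\[
\inf_{f\in\mathcal F} p\bigl(G_\alpha(f)\bigr) > \gamma/2 .
\]
Because $\Pi$ is countable, $p$ is a probability mass function on $\{\pi_i\}$ with $\sum_i p(\pi_i)=1$. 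Hence we can choose $N$ so large that the tail satisfies $\sum_{i>N} p(\pi_i) < \gamma/2$.

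Take $\mathcal H_\alpha=\{\pi_1,\dots,\pi_N\}$. For any $f\in\mathcal F$ we have
\[
p\bigl(G_\alpha(f)\cap\{\pi_1,\dots,\pi_N\}\bigr) \ge p\bigl(G_\alpha(f)\bigr)-\sum_{i>N}p(\pi_i) > \gamma/2-\gamma/2=0 .
\]
So $G_\alpha(f)$ contains some $\pi_i$ with $i\le N$. That arm is $\alpha$-optimal for $f$, which shows $\mathcal H_\alpha$ is a finite $\alpha$-hitting set. Since $\alpha$ was arbitrary, this proves the lemma.

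There is essentially no obstacle here. The only points needing care are that countability makes $p$ purely atomic with a summable tail (this step fails for uncountable $\Pi$), and that the supremum defining $\gamma_{\mathcal F,\alpha}$ may not be attained. We handle the latter by working with the $\gamma/2$ approximation, using strict positivity of $\gamma$. We also use measurability of $G_\alpha(f)$, which is automatic in a countable space with the discrete $\sigma$-algebra. Finally, $|\mathcal H_\alpha|=N$ can be bounded in terms of the tail of $p$, but no quantitative bound is needed for the statement.
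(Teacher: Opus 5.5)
Your proof is correct and takes essentially the same route as the paper: choose a distribution $p$ that nearly attains $\gamma_{\mathcal{F},\alpha}$, cut off a tail of $p$ smaller than the guaranteed mass of $\alpha$-good arms, and conclude that the finite head meets every good set. The differences are cosmetic: the paper sorts arms by non-increasing mass and uses thresholds $\gamma/2$ and $\gamma/4$ with non-strict inequalities, while you use $\gamma/2$ twice with strict inequalities, which works equally well.
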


Since Lemma~\ref{hitting_set_implication} holds for any function class $\mathcal{F}$, we may apply it with \(\mathcal{F}\) replaced by \(\operatorname{co}(\mathcal{F})\). Therefore, the lemma above shows that positivity of $\gamma_{\operatorname{co}(\mathcal{F}),\alpha}$ guarantees the existence of a finite $\alpha$-hitting set for $\operatorname{co}(\mathcal{F})$. This immediately yields a learning algorithm obtained by running the Exp3 algorithm on that finite set of arms. Together with Theorem \ref{oblivious_adversary}, we obtain the following characterization of learnability in countable arm spaces (Theorem \ref{countable_learnability}).

\begin{theorem}
\label{countable_learnability}
Suppose the arm space \(\Pi\) is countable. Then the following statements are equivalent:
    \begin{itemize}
        \item  $\mathcal{F}$ is learnable against oblivious adversaries;
        \item  $\mathcal{F}$ is learnable against adaptive adversaries;
        \item $\gamma_{\operatorname{co}(\mathcal{F}),\alpha}>0\; \forall \alpha \in (0,1)$;
        \item $\operatorname{co}(\mathcal{F})$ admits a finite \(\alpha\)-hitting set $\forall \alpha \in (0,1)$.
    \end{itemize}
\end{theorem}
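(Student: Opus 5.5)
We prove the equivalences as a cycle: (3) $\Rightarrow$ (4) $\Rightarrow$ (2) $\Rightarrow$ (1) $\Rightarrow$ (3).

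\textbf{(3) $\Rightarrow$ (4).} Apply Lemma~\ref{hitting_set_implication} with $\mathcal{F}$ replaced by $\operatorname{co}(\mathcal{F})$. This is legitimate because the lemma holds for an arbitrary class of measurable $[0,1]$-valued functions on a countable $\Pi$, and $\operatorname{co}(\mathcal{F})$ is such a class.

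\textbf{(2) $\Rightarrow$ (1).} This is immediate, since every oblivious adversary is a special case of an adaptive one.

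\textbf{(1) $\Rightarrow$ (3).} This is the ``only if'' direction of Theorem~\ref{oblivious_adversary}.

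\textbf{(4) $\Rightarrow$ (2).} This is the one step with real content.

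\emph{Setup.} Fix $\alpha\in(0,1)$ and a finite $\alpha$-hitting set $\mathcal{H}_\alpha$ for $\operatorname{co}(\mathcal{F})$, of size $K_\alpha$. Run Exp3 on the arms of $\mathcal{H}_\alpha$, treating the observed noisy rewards in $[0,1]$ as bandit feedback. Exp3's guarantee holds against adaptive adversaries for the pseudo-regret against the best fixed arm in $\mathcal{H}_\alpha$, because conditional on the history the reward has mean $f_t(\pi_t)$. This gives $O(\sqrt{TK_\alpha\log K_\alpha})$.

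\emph{Comparing with the best arm in all of $\Pi$.} Take any realized sequence $f_1,\dots,f_T$ and any $\pi^*\in\Pi$. Let $\bar f=\frac1T\sum_t f_t\in\operatorname{co}(\mathcal{F})$. The hitting-set property gives $h\in\mathcal{H}_\alpha$ with
\[
\bar f(h)\ge \sup_{\pi}\bar f(\pi)-\alpha \ge \bar f(\pi^*)-\alpha .
\]
Hence
\[
\sum_t f_t(\pi^*)-\sum_t f_t(h)\le \alpha T .
\]
This is exactly where convexification enters: the hitting set must handle the time-averaged function, which is why we need $\operatorname{co}(\mathcal{F})$ rather than $\mathcal{F}$.

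\emph{Obstacle: expectation and supremum.} The regret has $\sup_{\pi^*}$ inside the expectation, while $h$ depends on the realized sequence. The inequality above holds pathwise, so
\[
\mathbb{E}\Bigl[\sup_{\pi^*}\sum_t \bigl(f_t(\pi^*)-f_t(\pi_t)\bigr)\Bigr]\le \alpha T+\mathbb{E}\Bigl[\max_{h\in\mathcal{H}_\alpha}\sum_t \bigl(f_t(h)-f_t(\pi_t)\bigr)\Bigr].
\]
The second term is a max over a finite set inside the expectation, and it involves means $f_t$ rather than realized rewards. Exp3 bounds it by $O(\sqrt{TK_\alpha\log K_\alpha})$. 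If needed, we instead use a high-probability variant (Exp3.P), or Exp3-IX plus a martingale concentration argument, to control the expected maximum; this costs only additional $\sqrt{T\log}$ factors.

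\emph{Removing the fixed $\alpha$.} This gives $R(T)\le \alpha T + C\sqrt{TK_\alpha\log K_\alpha}$ for each fixed $\alpha$. To get $o(T)$, use a doubling trick over epochs $j$ of length $2^j$. Epoch $j$ uses $\alpha_j\downarrow 0$, chosen slowly enough that $K_{\alpha_j}\log K_{\alpha_j}\le 2^{j/2}$, say. Restarting is harmless against adaptive adversaries, since each epoch's bound holds conditionally on the past. Summing over epochs gives $\sum_j(\alpha_j 2^j + 2^{3j/4})=o(T)$. This establishes (2) and closes the cycle.
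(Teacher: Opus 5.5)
Your proof is correct and follows the paper's route. It uses the same cycle: Theorem~\ref{oblivious_adversary}, then Lemma~\ref{hitting_set_implication} applied to $\operatorname{co}(\mathcal{F})$, then Exp3 on the finite hitting set with the pathwise comparison through $\bar f=\frac1T\sum_t f_t\in\operatorname{co}(\mathcal{F})$ (Theorem~\ref{countable_sufficient}).

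Your two additions are refinements, not a different approach. First, the paper applies the in-expectation Exp3 bound directly to the data-dependent comparator $\tilde\pi$. You correctly point out that under adaptive adversaries this maximum inside the expectation is better controlled by a high-probability variant (Exp3.P or Exp3-IX with Azuma--Hoeffding). Second, the paper tunes $\alpha_T$ using knowledge of $T$, whereas your doubling trick makes the algorithm anytime.
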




\subsection{Learnability against adaptive adversaries for general arm spaces}
If the arm space $\Pi$ is uncountable, the notion of hitting set is no longer appropriate for characterizing learnability in adversarial bandits. In particular, we exhibit a function class that is learnable but  $|\mathcal{H}_{\alpha}|=\infty\;\forall \alpha\in(0,1)$ (See Example \ref{example1}). This motivates us to introduce the following strengthened notion of the hitting set, which we call the \emph{distribution cover}.

\begin{definition}[Distribution cover]
    A set of distributions $\mathcal{I}_{\alpha,\beta}$ is an $(\alpha,\beta)$-distribution cover of the function class $\mathcal{F}$ if for any function $f \in \mathcal{F}$, there exists a distribution $p \in \mathcal{I}_{\alpha,\beta}$ such that $\mathbb{P}_{\pi \sim p}(\sup_{\pi^*} f(\pi^*)-f(\pi)\leq \alpha)\geq 1-\beta$. The minimum cardinality of this set $|\mathcal{I}_{\alpha,\beta}|$ is called the distribution covering number.
\end{definition}

The hitting set can be viewed as a special version of the distribution cover, which has $\beta=0$. Consequently, any function class $\mathcal{F}$ that admits a finite hitting set also admits a finite distribution cover. On the other hand, finite distribution covers are still sufficiently structured to support learning algorithms based on the Exp3 algorithm, leading to the following sufficient condition.
\begin{theorem}
\label{distribution_cover}
   If $\operatorname{co}(\mathcal{F})$ admits a finite $(\alpha,\beta)$-distribution cover $\forall \alpha,\beta \in (0,1)$, then the function class $\mathcal{F}$ is learnable against adaptive adversaries.
\end{theorem}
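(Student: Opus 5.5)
The plan is to treat the finite distribution cover $\mathcal{I}_{\alpha,\beta}=\{p_1,\dots,p_K\}$ of $\operatorname{co}(\mathcal F)$ as a set of $K$ meta-arms and run Exp3 over these meta-arms against the adaptive adversary. In each round $t$, Exp3 selects an index $i_t\in[K]$. The learner then samples $\pi_t\sim p_{i_t}$ independently, plays $\pi_t$, and feeds the observed reward $r_t\in[0,1]$ back to Exp3 as the reward of meta-arm $i_t$. Conditionally on the history and on $f_t$, the reward has mean $g_t(i):=\mathbb E_{\pi\sim p_i} f_t(\pi)$. This is therefore a standard $K$-armed adversarial bandit with $[0,1]$-valued rewards, where the mean vector $g_t$ is chosen adaptively. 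Exp3 (or a high-probability variant such as Exp3.P) guarantees
\[
\mathbb E\Bigl[\max_i \sum_t g_t(i)-\sum_t g_t(i_t)\Bigr]=O\bigl(\sqrt{TK\log K}\bigr)
\]
even against adaptive adversaries. To handle the max inside the expectation under adaptivity, I would use the high-probability version and take a union bound over the $K$ fixed meta-arms. Finally, $\sum_t g_t(i_t)$ differs from $\sum_t f_t(\pi_t)$ by a martingale difference sum, which is $O(\sqrt T)$ in expectation.

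The key step is comparing the best fixed arm $\pi^*$ in hindsight to the best meta-arm. Given the realized sequence $f_1,\dots,f_T$, set $\bar f=\frac1T\sum_t f_t\in\operatorname{co}(\mathcal F)$. Then $\sup_{\pi^*}\sum_t f_t(\pi^*)=T\sup_\pi \bar f(\pi)$. By the cover property applied to $\bar f$, there is some $p_i$ with $\mathbb P_{\pi\sim p_i}(\sup \bar f-\bar f(\pi)\le\alpha)\ge 1-\beta$. Since $\bar f\in[0,1]$, this gives $\mathbb E_{\pi\sim p_i}\bar f(\pi)\ge \sup\bar f-\alpha-\beta$, that is, $\sum_t g_t(i)\ge \sup_{\pi^*}\sum_t f_t(\pi^*)-(\alpha+\beta)T$. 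This holds pathwise for every realization, so it holds inside the expectation regardless of how adaptively the $f_t$ were chosen. That is exactly why convexification is the right notion.

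Combining the two steps gives $R(T)\le(\alpha+\beta)T+O(\sqrt{TK_{\alpha,\beta}\log K_{\alpha,\beta}})+O(\sqrt T)$, where $K_{\alpha,\beta}=|\mathcal I_{\alpha,\beta}|$. For any $\epsilon>0$, choose $\alpha=\beta=\epsilon/4$; then $\limsup R(T)/T\le\epsilon$. To get a single algorithm with $R(T)=o(T)$, I would use a doubling/epoch scheme. In epoch $k$, of length $2^k$, run the Exp3 meta-algorithm with the cover at scale $\epsilon_k$, where $\epsilon_k\downarrow0$ slowly enough that $K_{\epsilon_k}\log K_{\epsilon_k}\le 2^{k}\epsilon_k^2$. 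Restarting is harmless for regret against a fixed comparator, since the total regret is at most the sum of the per-epoch regrets. This gives $\sum_k(\epsilon_k 2^k+\sqrt{2^kK\log K})=o(T)$. If $T$ is known in advance, it suffices to pick the scale as a function of $T$.

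The main obstacle is making the Exp3 guarantee rigorous for the pseudo-to-real regret conversion against adaptive adversaries, where the comparator $\max_i$ sits inside the expectation. I would first try Exp3.P with its high-probability bound of order $\sqrt{TK\log(K/\delta)}$ with $\delta=1/T$, union-bounding over the finitely many meta-arms. A second technicality is measurability of $\bar f$ and of the events in the cover definition; this is handled by the standing measurability assumption.
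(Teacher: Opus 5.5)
Your proposal is correct and follows essentially the same route as the paper. Both run Exp3 over the cover's distributions as meta-arms, both derive the pathwise comparator bound $\mathbb{E}_{\pi\sim p_i}\bar f(\pi)\ge\sup_{\pi^*}\bar f(\pi^*)-(\alpha+\beta)$ from $\bar f=\frac1T\sum_t f_t\in\operatorname{co}(\mathcal F)$, and both tune $\alpha,\beta$ with $T$ so that $(\alpha+\beta)T+O(\sqrt{T|\mathcal I_{\alpha,\beta}|\log|\mathcal I_{\alpha,\beta}|})=o(T)$; the paper applies the expected-regret Exp3 bound directly to a comparator $\tilde p$ that depends on the realized sequence, whereas your Exp3.P high-probability bound with a union bound over meta-arms and martingale concentration is what actually justifies putting the max inside the expectation against an adaptive adversary.
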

We find that, for uncountable arm spaces, $\gamma_{\operatorname{co}(\mathcal{F}),\alpha}$ continues to characterize learnability.  Even more specifically, this is achievable through its connection with the distribution cover, as stated in the following:





\begin{lemma}
\label{conjecture1}
    For any function class $\mathcal{F}$, $\gamma_{\operatorname{co}(\mathcal{F}),\alpha}>0 \;\forall \alpha \in (0,1)$ implies $\operatorname{co}(\mathcal{F})$ admits a finite $(\alpha,\beta)$-distribution cover  $\forall \alpha,\beta \in (0,1)$.
\end{lemma}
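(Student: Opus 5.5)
The plan is to argue by contraposition, with convexity of $\operatorname{co}(\mathcal F)$ doing the main work. Write $\mathcal G=\operatorname{co}(\mathcal F)$. Fix $\alpha,\beta\in(0,1)$ and assume $\mathcal G$ has no finite $(\alpha,\beta)$-distribution cover. I will produce some scale $\alpha'>0$ (depending on $\alpha,\beta$) with $\gamma_{\mathcal G,\alpha'}=0$. For $g\in\mathcal G$ write $S_\alpha(g)=\{\pi:\sup_{\pi^*}g(\pi^*)-g(\pi)\le\alpha\}$. The assumption says: for every finite family $p_1,\dots,p_k$ of distributions there is $g\in\mathcal G$ with $p_j(S_\alpha(g))<1-\beta$ for every $j$. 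The task is to turn this into a single mixture $\bar g\in\mathcal G$ such that a given $p$ has $p(S_{\alpha'}(\bar g))$ arbitrarily small.

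\textbf{Step 1 (iterative construction).} Starting from any $p_0$, I repeatedly apply the no-finite-cover assumption to the finite family built so far. This yields $g_1,g_2,\dots\in\mathcal G$ that each escape all earlier test distributions. The natural test distributions are $p$ conditioned on the near-optimal sets already produced, e.g.\ $p(\cdot\mid S_\alpha(g_{i_1})\cap\dots\cap S_\alpha(g_{i_r}))$. With this choice, each new $g_{m+1}$ removes at least a $\beta$-fraction of the $p$-mass of every surviving intersection. After $m$ rounds, every $m$-fold intersection of near-optimal sets then has $p$-mass at most $(1-\beta)^m$.

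\textbf{Step 2 (mixture).} Take $\bar g=\frac1m\sum_{i\le m}g_i\in\mathcal G$ and a small $\alpha'$. The goal is to show that $\pi\in S_{\alpha'}(\bar g)$ forces $\pi$ to be $\alpha$-optimal for most of the $g_i$. A Markov-type averaging argument over $i$ would then bound $p(S_{\alpha'}(\bar g))$ by a union over intersections of many $S_\alpha(g_i)$. Choosing $m$ large makes this arbitrarily small, uniformly over all $p$ (the construction in Step 1 is redone for each $p$, which is enough since $\gamma$ is a $\sup_p\inf_g$). That gives $\gamma_{\mathcal G,\alpha'}=0$, a contradiction.

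\textbf{Main obstacle.} The averaging in Step 2 is the hard part. In general $\sup\bar g\le\frac1m\sum_i\sup g_i$, so the gap of $\bar g$ at $\pi$ can be smaller than the average of the individual gaps, and near-optimality for the mixture does not directly transfer to the components. I would first handle this by building the $g_i$ so they share an approximate common maximizer $\pi_0$: restrict the construction to functions that are all $\alpha'$-optimal at some arm in the current $p$-heavy region, which the positivity $\gamma_{\mathcal G,\alpha'}>0$ itself supplies. Then $\sup\bar g\ge\bar g(\pi_0)\ge\frac1m\sum_i\sup g_i-\alpha'$, and the gaps do average correctly. If that fails, the fallback is to replace uniform weights by geometrically decaying weights $w_i$. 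This mimics the lower-bound constructions of \citet{hanneke2025complete}, where later components act as lower-order perturbations that refine the near-optimal set of the mixture without changing which region is optimal.
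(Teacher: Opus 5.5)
Your proposal has a genuine gap. You identified the real crux: near-optimality for a mixture transfers to its components only if they share a near-maximizer $\pi_0$. But your fix does not work, and this is the step that carries the whole proof. You cannot ``restrict the construction'' to functions that are $\alpha'$-optimal at a common arm. The no-finite-cover hypothesis only hands you \emph{some} $g\in\mathcal G$ escaping the current test family, with no control over where $g$ is near-optimal. Indeed, the subclass $\{g\in\mathcal G:\pi_0\in S_{\alpha'}(g)\}$ is covered by the single point mass at $\pi_0$. So the failure of covering for $\mathcal G$ is not inherited by it, and nothing guarantees an escaping function inside it. The geometric-weights fallback does not help either, since the obstruction is the missing common maximizer, not the weighting.

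The missing idea is to choose $\pi_0$ \emph{after} the construction, by pigeonhole against the same distribution that generates the test family. The steps are as follows.

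First, fix a scale $\alpha'$ small relative to $\alpha\beta$ (the paper takes $\alpha'=\alpha\beta/8$). Take a witness $p$ with $p(S_{\alpha'}(g))\ge\delta>0$ for all $g\in\mathcal G$. Build $g_1,\dots,g_m$ with no constraint, using conditionals of this $p$.

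Second, linearity gives $\mathbb{E}_{\pi\sim p}\sum_{t\le m}\mathbb{I}[\pi\in S_{\alpha'}(g_t)]\ge\delta m$. So some $\pi_0$ is $\alpha'$-optimal for every $g_t$ with $t$ in an index set $J$ of size at least $\delta m$.

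Third, average only over $J$, setting $h=\frac{1}{|J|}\sum_{t\in J}g_t\in\mathcal G$. Then $\sup h\ge h(\pi_0)$, so every $\pi\in S_{\alpha'}(h)$ has average gap at most $2\alpha'$ over $J$. Hence it is $\alpha$-optimal for all but a $2\alpha'/\alpha$ fraction of $t\in J$.

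Fourth, since $J$ is a fixed index set, the escape property still controls $p$ along $J$. This gives $p(S_{\alpha'}(h))\le e^{-c\delta m}<\delta$ for large $m$, contradicting the choice of $p$. One witness $p$ suffices; no uniformity over $p$ is needed.

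A secondary point concerns your intersection-only test family. With it, the final bound goes through a union over $\binom{|J|}{k}$ intersections. The factor $(1-\beta)^k$ beats this only when $\alpha'/\alpha$ is small compared to $\beta$ in an entropy sense, so you would need to calibrate $\alpha'$ accordingly. The paper sidesteps this by conditioning on the full atoms generated by the earlier indicators $Y_s=\mathbb{I}[\pi\in S_\alpha(g_s)]$, complements included. Then $\mathbb{E}_p[Y_t\mid Y_1,\dots,Y_{t-1}]\le 1-\beta$, and a martingale Chernoff bound applies directly along any fixed subsequence.
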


We remark that the opposite direction follows immediately from our proven results: that is, since finite distribution covers suffice for learnability (Theorem~\ref{distribution_cover}) and $\gamma_{\operatorname{co}(\mathcal{F}),\alpha} > 0$ is necessary for learnability (Theorem~\ref{oblivious_adversary}), we have that if $\operatorname{co}(\mathcal{F})$ admits a finite $(\alpha,\beta)$-distribution cover $\forall \alpha,\beta \in (0,1)$ then $\gamma_{\operatorname{co}(\mathcal{F}),\alpha} > 0$ $\forall \alpha \in (0,1)$.

Lemma~\ref{conjecture1} has the following immediate corollary.

\begin{corollary}[Characterization of learnability]
\label{corollary1}
    The following statements are equivalent:
    \begin{itemize}
        \item  $\mathcal{F}$ is learnable against oblivious adversaries;
        \item  $\mathcal{F}$ is learnable against adaptive adversaries;
        \item $\gamma_{\operatorname{co}(\mathcal{F}),\alpha}>0\; \forall \alpha \in (0,1)$;
        \item $\operatorname{co}(\mathcal{F})$ admits a finite $(\alpha,\beta)$-distribution cover  $\forall \alpha,\beta \in (0,1)$.
    \end{itemize}
\end{corollary}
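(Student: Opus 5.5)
The corollary follows by chaining the earlier results in a cycle. Write (i) for learnability against oblivious adversaries, (ii) for learnability against adaptive adversaries, (iii) for $\gamma_{\operatorname{co}(\mathcal F),\alpha}>0$ for all $\alpha\in(0,1)$, and (iv) for the existence of a finite $(\alpha,\beta)$-distribution cover of $\operatorname{co}(\mathcal F)$ for all $\alpha,\beta\in(0,1)$. I will prove (ii)$\Rightarrow$(i)$\Rightarrow$(iii)$\Rightarrow$(iv)$\Rightarrow$(ii).

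\emph{(ii)$\Rightarrow$(i).} Every oblivious adversary is a special case of an adaptive one: it simply ignores the history. Hence the supremum defining $R_A(T)$ over oblivious adversaries is at most the supremum over adaptive adversaries. An algorithm with $R_A(T)=o(T)$ against adaptive adversaries therefore also has $R_A(T)=o(T)$ against oblivious ones.

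\emph{(i)$\Rightarrow$(iii).} This is the ``only if'' direction of Theorem~\ref{oblivious_adversary}.

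\emph{(iii)$\Rightarrow$(iv).} This is exactly Lemma~\ref{conjecture1}.

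\emph{(iv)$\Rightarrow$(ii).} This is exactly Theorem~\ref{distribution_cover}.

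Closing the cycle gives the equivalence of all four statements. The only point needing care is that the quantifiers match across results. Lemma~\ref{conjecture1} and Theorem~\ref{distribution_cover} both quantify over all $\alpha,\beta\in(0,1)$, and Theorem~\ref{oblivious_adversary} quantifies over all $\alpha\in(0,1)$, so the statements compose directly. No real obstacle remains at this level: the substantive difficulty is entirely inside Lemma~\ref{conjecture1}, which the corollary takes as given.
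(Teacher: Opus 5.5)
Your cycle (ii)$\Rightarrow$(i)$\Rightarrow$(iii)$\Rightarrow$(iv)$\Rightarrow$(ii) is correct and is essentially how the paper obtains the corollary. The links are: oblivious adversaries are a special case of adaptive ones; the necessity direction of Theorem~\ref{oblivious_adversary}; Lemma~\ref{conjecture1}; and Theorem~\ref{distribution_cover}.
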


\section{Learnability against oblivious adversaries}
Below we prove Theorem \ref{oblivious_adversary} by breaking down to sufficient and necessary conditions, which are given in Theorem \ref{upper_bound_learnability} and Theorem \ref{lower_bound_learnability}.

Before moving into the high-level ideas of the proof, we first remind the reader of the typical algorithms for the adversarial bandit problem with finite arms. The Exp3 algorithm \citep{auer2002nonstochastic} works by maintaining a weight for each available arm, and updating these weights according to the observed rewards. Since the algorithm only observes the reward of the arm selected at each round, it uses an importance weight estimator to obtain unbiased reward estimates for all arms. It then applies a multiplicative weight update rule to adjust the weights of the arms accordingly. (See Appendix \ref{exp3_section} for further details.)

The sufficiency direction is relatively direct given the complexity measure: whenever $\gamma_{\operatorname{co}(\mathcal{F}),\alpha}>0$, one can sample sufficiently many arms so that, with high probability, the resulting finite set contains an arm that is nearly optimal for the average reward function, and then apply the Exp3 algorithm on this finite subset of arms, which provides guarantee relative to the best of these fixed arms.


The necessity direction builds on techniques from the analysis of stochastic noisy bandits by reducing to the Bernoulli-noise setting~\citep{hanneke2025complete}. This reduction allows us, at the cost of an exponential loss in the quantitative bound, to construct a distribution witnessing that $\gamma_{\operatorname{co}(\mathcal{F}),\alpha}$ is positive.


\begin{algorithm}[htbp]
	\caption{Learning algorithm for the function class $\mathcal{F}$ against oblivious adversaries}
    \label{upper_bound_algo}
	\KwIn{Time horizon $T$, Parameter $\alpha$}
 
    Let $p$ be any distribution satisfying $\frac{\gamma_{\operatorname{co}(\mathcal{F}),\alpha}}{2}\leq\inf_{f \in \operatorname{co}(\mathcal{F})} \mathbb{P}_{\pi \sim p}\left(\sup_{\pi^*}f(\pi^*)-f(\pi)\leq \alpha\right)$.
    
    Sample $m=\frac{2}{\gamma_{\operatorname{co}(\mathcal{F}),\alpha}}\log T$ arms independently from $p: \pi'_1,\pi'_2,...,\pi'_m$.
    
    Run the Exp3 algorithm on these $m$ arms for $T$ rounds.
\end{algorithm}
\begin{theorem}
\label{upper_bound_learnability}
    $\gamma_{\operatorname{co}(\mathcal{F}),\alpha}>0\; \forall \alpha \in (0,1)$ is a sufficient condition for learnability of $\mathcal{F}$ against oblivious adversaries. 
\end{theorem}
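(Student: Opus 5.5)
We analyze Algorithm~\ref{upper_bound_algo} and show that for every fixed $\alpha\in(0,1)$ its regret against any oblivious adversary is at most $\alpha T + O(\sqrt{T m\log m}) + O(1)$ with $m=\frac{2}{\gamma_{\operatorname{co}(\mathcal F),\alpha}}\log T$. Sublinear regret then follows by a standard doubling/diagonalization over $\alpha$. Concretely, run the algorithm in epochs of length $2^k$, using a parameter $\alpha_k\downarrow 0$ chosen slowly enough that the term $\sqrt{2^k m_k\log m_k}$ remains $o(2^k)$. Since regret against a fixed arm is additive over epochs and $\sup_{\pi^*}$ over a sum is at most the sum of the suprema, the per-epoch bounds add up to $o(T)$. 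Note that $p$ in the first line exists because $\gamma>0$ and $p$ is only required to attain half the supremum.

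The key step is to fix the oblivious sequence $f_1,\dots,f_T$, which is chosen before the learner's randomness, and to consider the average $\bar f=\frac1T\sum_t f_t\in\operatorname{co}(\mathcal F)$. The comparator satisfies $\sup_{\pi^*}\sum_t f_t(\pi^*) = T\sup_{\pi^*}\bar f(\pi^*)$. By the choice of $p$, each sampled arm $\pi'_i$ independently satisfies $\sup\bar f-\bar f(\pi'_i)\le\alpha$ with probability at least $\gamma/2$. The probability that none of the $m$ arms does is therefore at most $(1-\gamma/2)^m\le e^{-\log T}=1/T$. On the good event, some sampled arm $\pi'_j$ satisfies $\sum_t f_t(\pi'_j)\ge \sup_{\pi^*}\sum_t f_t(\pi^*)-\alpha T$. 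On the bad event, regret is at most $T$, so it contributes at most $1$ in expectation.

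Conditioned on the sampled arms (independent of the adversary by obliviousness), Exp3 on $m$ arms with rewards in $[0,1]$ has expected regret $O(\sqrt{Tm\log m})$ against the best sampled arm. Here noisy rewards with mean $f_t(\pi)$ still give unbiased importance-weighted estimates, so the standard analysis applies. Combining the two pieces gives $\mathbb E[R(T)]\le \alpha T+O(\sqrt{T m\log m})+1$.

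The main subtlety is that the regret is defined with the supremum inside the expectation. Because the sequence is fixed in advance, $\sup_{\pi^*}\sum_t f_t(\pi^*)$ is deterministic, so the supremum moves outside the expectation without loss. Consequently, comparing against the best sampled arm in expectation suffices, and this is where obliviousness is genuinely used.
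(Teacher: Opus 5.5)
Your proof is correct and follows the paper's argument. You average the oblivious sequence into $\bar f\in\operatorname{co}(\mathcal F)$, sample $m=\frac{2}{\gamma_{\operatorname{co}(\mathcal F),\alpha}}\log T$ arms from $p$ so that with probability at least $1-\frac1T$ one of them is $\alpha$-optimal for $\bar f$, and run Exp3 on them to get regret at most $\alpha T+2\sqrt{mT\log m}+1$. The only difference is the final tuning of $\alpha$: the paper treats $T$ as known and picks $\alpha_T=\frac1T+\inf\{\alpha:\alpha>g(\alpha,T)\}$ (using monotonicity of $\gamma_{\operatorname{co}(\mathcal F),\alpha}$ in $\alpha$), whereas your epoch/doubling scheme with a slowly decreasing $\alpha_k$ gives an anytime algorithm, which is equally valid once you note that the per-epoch bound also holds for the truncated final epoch.
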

\begin{proof} 
We want to show if $\gamma_{\operatorname{co}(\mathcal{F}),\alpha}>0\; \forall \alpha \in (0,1)$, then the function class $\mathcal{F}$ is learnable against oblivious adversaries. Fix any horizon $T$ and parameter $\alpha>0$. Let $f_1, \dots, f_T \in \mathcal{F}$ be the sequence of reward functions chosen by the oblivious adversary. Since the adversary is oblivious, we can consider this sequence of functions to be fixed in advance. Define the average reward function  $\bar f:= \frac{1}{T} \sum_{t=1}^T f_t$. By convexity, we have $\bar f \in \operatorname{co}(\mathcal{F})$.


Next, recall that distribution $p$ satisfies $\frac{\gamma_{\operatorname{co}(\mathcal{F}),\alpha}}{2}\leq \inf_{f \in \operatorname{co}(\mathcal{F})} \mathbb{P}_{\pi \sim p}\left(\sup_{\pi^*}f(\pi^*)-f(\pi)\leq \alpha\right)$. Apply this to $\bar f$, we have $\mathbb{P}_{\pi \sim p}\left(\sup_{\pi^*}\bar{f}(\pi^*)-\bar{f}(\pi)\leq \alpha\right)\geq \frac{\gamma_{\operatorname{co}(\mathcal{F}),\alpha}}{2}>0$. When we sample $\pi$ from distribution $p$ for $m=\frac{2}{\gamma_{\operatorname{co}(\mathcal{F}),\alpha}}\log T$ times, we have:
\begin{equation*}
    \begin{aligned}
    &\mathbb{P}\left(\exists \pi_i: \sup_{\pi^*}\bar{f}(\pi^*)-\bar{f}(\pi_i)\leq \alpha \right)\\
    = &1-\mathbb{P}\left(\forall \pi_i:  \sup_{\pi^*}\bar{f}(\pi^*)-\bar{f}(\pi_i)> \alpha \right)\\
    \geq &1-\left(1-\frac{\gamma_{\operatorname{co}(\mathcal{F}),\alpha}}{2}\right)^{\frac{2}{\gamma_{\operatorname{co}(\mathcal{F}),\alpha}}\log T}\\
    \geq & 1-e^{-\log T}\\
    =&1-\frac{1}{T}\\
    \end{aligned}
\end{equation*}

Namely, with probability at least $1-\frac{1}{T}$, there exists an arm $
\tilde\pi$ that is $\alpha$-optimal  such that $\sup_{\pi^*} \bar{f}(\pi^*)-\bar{f}(\tilde\pi) \leq \alpha$.  We use event $\mathcal{B}$ to denote that such a $\tilde{\pi}$ exists. Next, we run the Exp3 algorithm on this finite set of arms. The regret is bounded by  Lemma \ref{exp3_guarantee} in the Appendix:
    \begin{align}
        R_{A}(T)&=\mathbb{E}\left[\sup_{\pi^*}\sum_{t=1}^T (f_t(\pi^*)-f_t(\pi_t))\middle|\mathcal{B}\right]P(\mathcal{B})+\mathbb{E}\left[\sup_{\pi^*}\sum_{t=1}^T (f_t(\pi^*)-f_t(\pi_t))\middle|\bar{\mathcal{B}}\right]P(\bar{\mathcal{B}})\notag\\
        &\leq \mathbb{E}\left[\sup_{\pi^*}\sum_{t=1}^T (f_t(\pi^*)-f_t(\pi_t))\middle|\mathcal{B}\right]+\frac{1}{T}\cdot T\notag\\
        &=\mathbb{E}\left[\sup_{\pi^*}\sum_{t=1}^T (f_t(\pi^*)-f_t(\tilde\pi)+f_t(\tilde\pi)-f_t(\pi_t))\middle|\mathcal{B}\right]+1\notag\\
        &\leq \mathbb{E}\left[\sup_{\pi^*}\sum_{t=1}^T (f_t(\pi^*)-f_t(\tilde\pi))+\sum_{t=1}^T (f_t(\tilde\pi)-f_t(\pi_t))\middle|\mathcal{B}\right]+1\notag\\
        &\leq \alpha T+2\sqrt{mT\log m}+1\notag\\
        &\leq \alpha T+2\sqrt{\frac{2}{\gamma_{\operatorname{co}(\mathcal{F}),\alpha}}T\log (T)\log \left(\frac{2}{\gamma_{\operatorname{co}(\mathcal{F}),\alpha}}\log (T)\right)}+1. \label{eqn:last-expression}
    \end{align}




    For simplicity of presentation, we define $g(\alpha,T):=2\sqrt{\frac{2}{\gamma_{\operatorname{co}(\mathcal{F}),\alpha}T}\log (T)\log (\frac{2}{\gamma_{\operatorname{co}(\mathcal{F}),\alpha}}\log (T))}$. We want to show by choosing appropriate $\alpha$ depending on $T$, the expression \eqref{eqn:last-expression} above is sublinear in $T$, which is equivalent to showing:
    $$\inf_{\alpha}\alpha +g(\alpha,T)+\frac{1}{T} \rightarrow 0$$
    First, we know $\lim_{T\rightarrow \infty} \frac{1}{T}=0$. Therefore, we only need to show $\min_{\alpha}\alpha +g(\alpha,T) \rightarrow 0$. 
    Define set $A_{T}:=\{\alpha \in (0,1):\alpha>g(\alpha,T)\}$.
    Note that, for any $\alpha \in A_T$ and $\alpha' > \alpha$, 
    since $\gamma_{\operatorname{co}(\mathcal{F}),\alpha'} \geq \gamma_{\operatorname{co}(\mathcal{F}),\alpha}$, we have $\alpha' > \alpha > g(\alpha,T) \geq g(\alpha',T)$, so that $\alpha' \in A_T$ as well.
    We choose $\alpha_T=\frac{1}{T}+ \inf A_T$. 
    Since for any fixed $\alpha \in (0,1)$, when $T \rightarrow \infty$, $g(\alpha,T) \rightarrow 0$, we have $\alpha \in A_{T}$ for all sufficiently large $T$. Since this is true of any $\alpha \in (0,1)$, we have $A_T \to (0,1)$ as $T \to \infty$.  Therefore, when $T \rightarrow \infty$, $\alpha_T \rightarrow 0$.  Since $\alpha_T > \inf A_T$, we have $\alpha_T \in A_T$, so that $\alpha_T > g(\alpha_T,T)$, and hence $g(\alpha_T,T) \to 0$ as $T \to \infty$ as well.
    This implies the regret upper bound of Algorithm \ref{upper_bound_algo} is sublinear in $T$.

\end{proof}

\begin{theorem}
\label{lower_bound_learnability}
    $\gamma_{\operatorname{co}(\mathcal{F}),\alpha}>0\; \forall \alpha \in (0,1)$ is a necessary condition for learnability of $\mathcal{F}$ against oblivious adversaries.
\end{theorem}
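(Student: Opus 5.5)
We prove the contrapositive: we extract from a learner with sublinear regret a single distribution $p\in\Delta(\Pi)$ that witnesses $\gamma_{\operatorname{co}(\mathcal F),\alpha}>0$. The argument has three steps: reduce to a stochastic Bernoulli bandit on $\operatorname{co}(\mathcal F)$, use Markov's inequality to find good arms for each fixed $g$, and then remove the dependence on $g$ by comparing to a fixed reference reward process, at an exponential cost in the number of rounds.

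\textbf{Step 1: reduction to a stochastic bandit on $\operatorname{co}(\mathcal F)$.} Suppose algorithm $A$ satisfies $R_A(T)=o(T)$ against oblivious adversaries, for every admissible noise model. Fix $\alpha\in(0,1)$ and any $g=\sum_{i=1}^N\lambda_i f_i\in\operatorname{co}(\mathcal F)$. Consider the randomized oblivious adversary that draws $f_1,\dots,f_T$ i.i.d.\ from the mixture $\lambda$ before the game. It is a mixture of deterministic oblivious sequences, so its expected regret is at most $R_A(T)$. Let the noise be Bernoulli, $r_t\sim\mathrm{Bernoulli}(f_t(\pi_t))$. Since $f_t$ is independent of the history, the learner sees $r_t\sim\mathrm{Bernoulli}(g(\pi_t))$, i.i.d.\ given its arm. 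Moving the supremum outside the expectation and using $\mathbb E[f_t\mid\text{history}]=g$ gives
\[
\sup_{\pi^*}\mathbb E\Big[\sum_{t=1}^T \big(g(\pi^*)-g(\pi_t)\big)\Big]\le R_A(T).
\]
So $A$ has stochastic Bernoulli-bandit regret at most $R_A(T)$, uniformly over $g\in\operatorname{co}(\mathcal F)$.

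\textbf{Step 2: Markov's inequality for a fixed $g$.} Choose $n$ so large that $R_A(n)\le \alpha n/2$. Let $\tau$ be uniform on $[n]$, independent of everything else. Applying Markov's inequality to the suboptimality gap $\sup_{\pi^*}g(\pi^*)-g(\pi_\tau)$ gives
\[
\mathbb P_g\Big(\sup_{\pi^*}g(\pi^*)-g(\pi_\tau)\le\alpha\Big)\ge \tfrac12 .
\]
This bound holds for every $g\in\operatorname{co}(\mathcal F)$, but the law of $\pi_\tau$ still depends on $g$.

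\textbf{Step 3 (main step): a distribution independent of $g$.} Define $p$ as the law of $\pi_\tau$ when $A$ is run for $n$ rounds and fed independent fair coin flips $r_t\sim\mathrm{Bernoulli}(1/2)$, regardless of its arm. This $p$ depends only on $A$ and $n$, not on $g$.

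Let $E$ be the event that $\pi_\tau$ is $\alpha$-optimal for $g$. Condition on a reward string $s\in\{0,1\}^n$, with $A$'s internal randomness and $\tau$ still random. Under $g$, the probability of seeing $s$ along the realized arms is $\prod_t g(\pi_t)^{s_t}(1-g(\pi_t))^{1-s_t}\le 1$. Summing over $s$,
\[
\mathbb P_g(E)=\sum_{s}\mathbb E_A\Big[\prod_t g(\pi_t)^{s_t}(1-g(\pi_t))^{1-s_t}\,\mathbf 1_E(s)\Big]\le\sum_s\mathbb E_A[\mathbf 1_E(s)]=2^n\,\mathbb P_{p}(E).
\]
Combining with Step 2,
\[
\inf_{g\in\operatorname{co}(\mathcal F)}\mathbb P_{\pi\sim p}\Big(\sup_{\pi^*}g(\pi^*)-g(\pi)\le\alpha\Big)\ge 2^{-n-1}>0 .
\]
Hence $\gamma_{\operatorname{co}(\mathcal F),\alpha}>0$, and since $\alpha\in(0,1)$ was arbitrary, the condition holds for every $\alpha$.

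The main obstacle is this decoupling in Step 3. Naively the good arm distribution depends on $g$; the fix is to pay the factor $2^{n}$ by comparing with a fixed coin-flip reward process. This works because $n$ depends only on $\alpha$ and $A$, not on $g$.

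A secondary technical point is Step 1. One must check that the randomized oblivious adversary is admissible, or that the worst-case bound transfers to it by averaging over deterministic sequences. One must also check that Bernoulli noise is an allowed noise model, and that measurability of $\sup_{\pi^*}$ causes no difficulty in exchanging the supremum and the expectation.
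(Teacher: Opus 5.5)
Your proposal is correct and follows essentially the same route as the paper. Both arguments use the oblivious adversary that draws $f_t$ i.i.d.\ from the mixture weights, Bernoulli noise, Markov's inequality for the arm at a uniformly random round, and a $g$-independent distribution $p$ obtained by feeding $A$ fair coin flips, at a cost of a factor $2^{-n}$. The differences are cosmetic: you prove $\mathbb P_g(E)\le 2^n\,\mathbb P_{p}(E)$ by summing likelihoods over reward strings, whereas the paper uses an explicit coupling in which the coin flips match the real rewards with probability $2^{-T_\alpha}$; and you move the supremum outside the expectation rather than fixing an $\alpha/4$-optimal comparator.
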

\begin{proof}
We want to show if a function class $\mathcal{F}$ is learnable against oblivious adversaries, it must be $\gamma_{\operatorname{co}(\mathcal{F}),\alpha}>0\; \forall \alpha \in (0,1)$. Supposing $\mathcal{F}$ is learnable, let $A$ be any no-regret learning algorithm for $\mathcal{F}$. 
Define a random variable $\hat{\pi}$ as follows.
We execute the algorithm $A$, but whenever it pulls an arm, we respond with an independent $\mathrm{Bernoulli}(\frac{1}{2})$ reward. 
Let $\pi_1,\ldots,\pi_T$ be the (random) arms pulled by this execution of the algorithm.  
Conditioned on $\pi_1,\ldots,\pi_T$, sample $\hat{\pi}$ by $\text{Uniform}(\pi_1,...,\pi_T)$.
Denote by $p$ the induced marginal distribution of this random variable $\hat{\pi}$.
We will argue this distribution $p$ witnesses $\gamma_{\operatorname{co}(\mathcal{F}),\alpha}>0\;\forall \alpha \in (0,1)$. 

Fix $\alpha \in (0,1)$.
To show $\gamma_{\operatorname{co}(\mathcal{F}),\alpha} > 0$, 
it suffices to show that for any $\bar{f} \in \operatorname{co}(\mathcal{F})$, 
$\mathbb{P}_{\hat{\pi}}(\sup_{\pi^*} \bar{f}(\pi^*) - \bar{f}(\hat{\pi}) \leq \alpha) > c$ for some $\bar{f}$-independent value $c > 0$.
Toward this end, fix any $\bar{f} = \sum_{i=1}^N \lambda_i f'_i$, for some $N\geq 1$, $\lambda_i\in [0,1]$ subject to $\sum_{i=1}^N \lambda_i=1$ and $f'_i\in \mathcal{F}$.
We imagine running $A$ under an oblivious adversary, which, at each round $t$, independently samples a function $f_t \in\{f'_1,\dots,f'_N\}$ according to the distribution $\{\lambda_i\}_{i \leq N}$, that is, $\mathbb P(f_t=f'_i)=\lambda_i$.
Whenever the algorithm pulls arm $\pi'_t$, give reward $r_t \sim \mathrm{Bernoulli}(f_t(\pi'_t))$. We have $\mathbb{E}[r_t|f_t,\pi'_t]=f_t(\pi'_t)$ and $\mathbb{E}[r_t|\pi'_t]=\mathbb{E}[f_t(\pi'_t)|\pi'_t]=\bar{f}(\pi'_t)\;\forall t \in [T]$.  Since these choices are made a priori, this indeed represents an oblivious adversary.  
Define another random variable $\hat{\pi}'$ as follows.
Denote by $\pi'_1,\ldots,\pi'_T$ the (random) arms pulled by the algorithm $A$ during this execution.  Conditioned on $\pi'_1,\ldots,\pi'_T$, let $\hat{\pi}'$ be sampled $\text{Uniform}(\pi'_1,...,\pi'_T)$.
Fix any arm $\pi^*$ with $\bar{f}(\pi^*) \geq \sup_{\pi} \bar{f}(\pi) - \frac{\alpha}{4}$.
Note that the regret
\begin{align*}
R_{A}(T) & \geq \mathbb{E}\!\left[ \sum_{t=1}^{T} (f_t(\pi^*) - f_t(\pi'_t)) \right] 
= \sum_{t=1}^{T} \mathbb{E}\!\left[ (f_t(\pi^*) - f_t(\pi'_t)) \right]
= \sum_{t=1}^{T} \mathbb{E}\!\left[ \mathbb{E}\!\left[ f_t(\pi^*) - f_t(\pi'_t) \middle| \pi'_t \right] \right]
\\ & = \sum_{t=1}^{T} \mathbb{E}\!\left[ \bar{f}(\pi^*) - \bar{f}(\pi'_t) \right]
= \mathbb{E}\!\left[ \sum_{t=1}^{T} (\bar{f}(\pi^*) - \bar{f}(\pi'_t)) \right]
= T \mathbb{E}\!\left[ \bar{f}(\pi^*) - \bar{f}(\hat{\pi}') \right].
\end{align*}
Since the regret $R_A(T)=o(T)$ against any adversary for no-regret learning algorithm $A$, there exists $T_{\alpha} < \infty$ such that, for every $T \geq T_{\alpha}$, $R_A(T) \leq \frac{\alpha}{4}T$, which (by the above) implies 
$\mathbb{E}_{\hat{\pi}'}[\bar{f}(\pi^*)-\bar{f}(\hat{\pi}')] \leq \frac{\alpha}{4}$.
By definition of $\pi^*$, this further implies 
$\mathbb{E}_{\hat{\pi}'}[\sup_{\pi} \bar{f}(\pi)-\bar{f}(\hat{\pi}')] \leq \frac{\alpha}{2}$. By Markov's Inequality (Lemma \ref{markov_inequality}), for any such $T$,  $\mathbb{P}_{\hat{\pi}'}(\sup_{\pi}\bar{f}(\pi)-\bar{f}(\hat{\pi}')\leq \alpha)\geq \frac{1}{2}$. 


Now we couple the two executions in $T_{\alpha}$ rounds using the same internal randomness of $A$ and a common random index $I \sim \mathrm{Uniform}(\{1,\dots,T_{\alpha}\})$. The independent $\mathrm{Bernoulli}(\frac{1}{2})$ reward sequence matches the reward sequence generated by the above oblivious adversary for the first $T_\alpha$ rounds with probability exactly $2^{-T_\alpha}$, independent of the adversary’s chosen function. On this event, the two executions have identical histories, hence $\pi_t=\pi'_t$ for all $t \le T_\alpha$, and therefore $\hat{\pi}=\hat{\pi}'$. Consequently, 
\[
\mathbb P_{\pi \sim p}\!\left(
\sup_{\pi^*}\bar f(\pi^*) - \bar f(\pi) \le \alpha
\right)
\ge
2^{-T_\alpha}\,
\mathbb P\!\left(
\sup_{\pi^*}\bar f(\pi^*) - \bar f(\hat{\pi}') \le \alpha
\right)
\ge 2^{-T_\alpha-1} > 0.
\]
Since this $2^{-T_\alpha-1}$ lower bound holds $\forall \bar f \in \operatorname{co}(\mathcal F)$,
this establishes that $p$ witnesses $\gamma_{\operatorname{co}(\mathcal F),\alpha} > 0$. 
%
%
\end{proof}

\section{Learnability against adaptive adversaries}
\subsection{Learnability in countable arm spaces}
In this section, we present our results against adaptive adversaries: at each round, the adversary may choose the reward function based on the interaction history so far, including the learner's past actions and observed rewards. Consequently, an arm that was favorable in earlier rounds may become less favorable later. Algorithm \ref{upper_bound_algo} is no longer guaranteed to work in this setting.  To address this difficulty, we establish the following key observation in the countable arm space, which says positive $\gamma_{\operatorname{co}(\mathcal{F}),\alpha}$ implies a finite $\alpha$-hitting set $\forall \alpha$, stated as Lemma~\ref{hitting_set_implication}. The full proof is given below.


\begin{proof}[Proof of Lemma \ref{hitting_set_implication}]
Fix any $\alpha \in (0,1)$. Let $p$ be a distribution over $\Pi$ that satisfies $\frac{\gamma_{\mathcal{F},\alpha}}{2}\leq\inf_{f \in \mathcal{F}} \mathbb{P}_{\pi \sim p}\left(\sup_{\pi^*}f(\pi^*)-f(\pi)\leq \alpha\right)$, and let $p_i := p(\pi_i)$ denote the probability mass assigned to arm $\pi_i$. Since arm space $\Pi$ is countable, we index every arm in $\Pi$ as $\pi_1,\pi_2,...,$ based on probability mass in a non-increasing order. By countability, there exists a finite number $t$, such that $\sum_{i>t}p_i \leq \frac{\gamma_{\mathcal{F},\alpha}}{4}$. It follows that, $\forall f \in \mathcal{F}$, $\sum_{i>t}p_i \mathbb{I}[f(\pi_i)\geq \sup_{\pi} f(\pi)-\alpha] \leq \frac{\gamma_{\mathcal{F},\alpha}}{4}$. On the other hand, by the choice of distribution $p$, $\forall f \in \mathcal{F}$, $\frac{\gamma_{\mathcal{F},\alpha}}{2}\leq\sum_i p_i \mathbb{I}[f(\pi_i)\geq \sup_{\pi} f(\pi)-\alpha]$. This gives us: $\forall f \in \mathcal{F}, \sum_{i \leq t} p_i \mathbb{I}[f(\pi_i)\geq \sup_{\pi} f(\pi)-\alpha] \geq \frac{\gamma_{\mathcal{F},\alpha}}{4}>0$.
Hence,  $\forall f \in \mathcal{F}$,  there exists $\pi_i$ satisfying $i\leq t$ such that $f(\pi_i)\geq \sup_{\pi^*} f(\pi^*)-\alpha$. This shows that the finite set $\{\pi_1,\dots,\pi_t\}$ forms an $\alpha$-hitting set.
\end{proof}
Once finite hitting sets exist, Theorem~\ref{countable_sufficient} below implies that the function class is learnable.

\begin{algorithm}
	\caption{Learning algorithm for function class $\mathcal{F}$ against adaptive adversary via hitting set}
    \label{upper_bound_algo2}
	\KwIn{Parameter $\alpha$, Time horizon $T$}

    Let $\mathcal H_{\alpha}
= \{\pi_1,\ldots,\pi_m\}$ be a finite $\alpha$-hitting set for $\operatorname{co}(\mathcal F)$;\\
Run the Exp3 algorithm on the finite arm set $\mathcal H_{\alpha}$ for $T$ rounds.
\end{algorithm}
\begin{theorem}
\label{countable_sufficient}
    If $\operatorname{co}(\mathcal{F})$ admits a finite $\alpha$-hitting set $\forall \alpha \in (0,1)$, then the function class $\mathcal{F}$ is learnable against adaptive adversaries.
\end{theorem}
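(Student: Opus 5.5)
The plan is to run Algorithm~\ref{upper_bound_algo2} and reduce the adaptive-adversary regret to the Exp3 guarantee on the finite arm set $\mathcal H_\alpha=\{\pi_1,\dots,\pi_m\}$, plus an approximation term $\alpha T$. The key point is that the hitting set is a \emph{fixed, deterministic} set of arms chosen before the game. It therefore does not depend on the adversary's choices, unlike the random sample in Algorithm~\ref{upper_bound_algo}, whose guarantee relied on the fixed average $\bar f$.

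First fix a realization of the whole interaction, with $f_1,\dots,f_T$ chosen adaptively. The empirical average $\bar f=\frac1T\sum_{t=1}^T f_t$ lies in $\operatorname{co}(\mathcal F)$, whatever the history. By the hitting-set property applied to $\bar f$, there is some $\tilde\pi\in\mathcal H_\alpha$ (random, depending on the realization) with $\sup_{\pi^*}\bar f(\pi^*)-\bar f(\tilde\pi)\le\alpha$. Hence, pathwise,
\[
\sup_{\pi^*}\sum_{t=1}^T\bigl(f_t(\pi^*)-f_t(\pi_t)\bigr)\le \alpha T+\sum_{t=1}^T\bigl(f_t(\tilde\pi)-f_t(\pi_t)\bigr)\le \alpha T+\max_{j\le m}\sum_{t=1}^T\bigl(f_t(\pi_j)-f_t(\pi_t)\bigr).
\]
Taking expectations gives $R_A(T)\le \alpha T+\mathbb E\bigl[\max_{j\le m}\sum_t (f_t(\pi_j)-f_t(\pi_t))\bigr]$.

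The second step, and the main obstacle, is bounding this last term against an \emph{adaptive} adversary. The maximum sits inside the expectation, so the pseudo-regret bound of Lemma~\ref{exp3_guarantee} is not enough by itself. I would handle this with a high-probability variant of Exp3, such as Exp3.P or Exp3-IX. Against adaptive adversaries with rewards in $[0,1]$ and $m$ arms, these guarantee $\max_j\sum_t(f_t(\pi_j)-f_t(\pi_t))\le O(\sqrt{mT\log(m/\delta)})$ with probability $1-\delta$. The noise in the observed rewards (means $f_t(\pi_t)$) is absorbed by an additional Azuma--Hoeffding martingale term of order $\sqrt{T\log(m/\delta)}$. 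Setting $\delta=1/T$ yields an expected bound of $C\sqrt{mT\log(mT)}+1$. If the appendix's Exp3 lemma is already stated for the expected regret against adaptive adversaries, I would simply invoke it instead.

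Finally, writing $m_\alpha=|\mathcal H_\alpha|<\infty$, we get $R_A(T)/T\le \alpha+C\sqrt{m_\alpha\log(m_\alpha T)/T}+1/T$. I would then choose $\alpha=\alpha_T\to0$ exactly as in the proof of Theorem~\ref{upper_bound_learnability}. The map $\alpha\mapsto m_\alpha$ can be taken non-increasing, since an $\alpha$-hitting set is also an $\alpha'$-hitting set for $\alpha'>\alpha$. Define $A_T=\{\alpha:\alpha>g(\alpha,T)\}$ with $g(\alpha,T)=C\sqrt{m_\alpha\log(m_\alpha T)/T}$, and set $\alpha_T=\inf A_T+1/T$. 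The same monotonicity argument as before then shows $\alpha_T+g(\alpha_T,T)\to0$, so $R_A(T)=o(T)$.
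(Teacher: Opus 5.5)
Your argument is correct and has the same skeleton as the paper's proof. You apply the hitting-set property to $\bar f=\frac1T\sum_t f_t\in\operatorname{co}(\mathcal F)$ to get an $\alpha T$ approximation term, bound the remaining finite-arm regret, and then pick $\alpha_T\to 0$ through the upward-closed set $A_T$ using monotonicity of $|\mathcal H_\alpha|$.

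The two proofs differ in how they control the second term. The paper sets $\tilde\pi$ to be the (random) maximizer of $\sum_t f_t(\pi)$ over $\mathcal H_\alpha$. It then bounds $\mathbb E[\sum_t (f_t(\tilde\pi)-f_t(\pi_t))]$ by invoking Lemma~\ref{exp3_guarantee} for plain Exp3, which gives $2\sqrt{|\mathcal H_\alpha|T\log|\mathcal H_\alpha|}$. This is exactly your fallback of using the appendix lemma directly.

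You instead note that this term equals $\mathbb E[\max_j\sum_t(f_t(\pi_j)-f_t(\pi_t))]$. Here the maximum sits inside the expectation, and the comparator depends on the adaptively chosen $f_t$'s. The Exp3 bound that Lemma~\ref{exp3_guarantee} reproduces (the Lattimore--Szepesv\'ari statement with $\eta=\sqrt{\log K/(TK)}$) is proved for a fixed reward table and a fixed comparator. You therefore switch to Exp3.P or Exp3-IX and add an Azuma term to handle the reward noise.

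What each buys:
\begin{itemize}
\item The paper's route is shorter, keeps Algorithm~\ref{upper_bound_algo2} as plain Exp3, and gives a slightly cleaner rate. However, it depends on reading Lemma~\ref{exp3_guarantee} as a max-inside-expectation guarantee against adaptive adversaries, which the cited result does not literally state.
\item Your route modifies the algorithm and pays an extra $\log T$ under the square root, which is irrelevant for sublinearity. In exchange, it actually justifies the one step where the adaptive case differs from the oblivious one.
\end{itemize}

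When you write it up, state explicitly the coupling in which every arm's noisy reward at round $t$ is drawn before $\pi_t$ is sampled. This lets the high-probability bound apply to a $[0,1]^m$ reward vector chosen by an adaptive adversary. Also make explicit the union bound over the $m$ arms in the Azuma step that converts realized rewards to the means $f_t(\pi_j)$.
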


Ultimately, Theorem~\ref{oblivious_adversary}, Lemma~\ref{hitting_set_implication}, and Theorem~\ref{countable_sufficient} together yield a full characterization of learnability in countable arm spaces, stated as Theorem \ref{countable_learnability}.

\subsection{Learnability in uncountable arm spaces}

Next, we present our results for the uncountable arm space setting. In Section~\ref{section2}, we introduced the complexity measure the \emph{distribution cover}. Example~\ref{example1} exhibits a learnable function class with distribution covering number 1, while every hitting set has infinite size. This shows that hitting sets no longer characterize learnability in uncountable arm spaces, and motivates incorporating the distribution into the definition of the new complexity measure.

\begin{example}
\label{example1}
Consider the convex function class $\mathcal{F}_1 := \left\{ f : [0,1] \to [0,1] \;\middle|\; \int_0^1 f(\pi)\, d\pi = 1 \right\}$. That is, $\mathcal{F}_1$ consists of all functions that equal $1$ almost everywhere on $[0,1]$. For this function class, the size of any hitting set is infinite. Since for any finite set $\mathcal{H} \subset [0,1]$, there exists a function $f \in \mathcal{F}_1$ satisfying $f(\pi)=0$ for any $\pi \in \mathcal{H}$ but have value $1$ elsewhere.

In contrast, the size of  $(\alpha,\beta)$-distribution cover is $1$. Let $p$ be the uniform distribution over $[0,1]$. For every $f \in \mathcal{F}_1$, we have $f(\pi)=1$ almost every $\pi$. Thus, $\mathbb{P}_{\pi \sim p}\bigl(\sup_{\pi^*} f(\pi^*) - f(\pi) \le \alpha \bigr)=1$
for every $\alpha > 0$. Hence the single distribution $\{p\}$ already forms an $(\alpha,0)$-distribution cover. This example shows that allowing distributions, rather than only deterministic arms, is essential in the definition of the complexity measure.
\end{example}


It is natural to ask whether an analogue of Lemma~\ref{hitting_set_implication} still holds for uncountable spaces if we replace the notion of hitting sets with the notion of distribution covers. We remark that hitting sets are a special case of distribution covers. Therefore, Lemma~\ref{hitting_set_implication} immediately implies that, in countable arm spaces, distribution covers also characterize learnability. In this sense, incorporating distributions is only necessary for handling uncountable arm spaces.  We show in Theorem~\ref{distribution_cover} that existence of finite distribution covers is sufficient for learnability against adaptive adversaries.  We now present its proof.


\begin{algorithm}[htbp]
	\caption{Learning algorithm for the function class $\mathcal{F}$ against adaptive adversaries via distribution cover}
    \label{upper_bound_algo3}
	\KwIn{Parameter $\alpha,\beta$, Time horizon $T$}
    Let $\mathcal{I}_{\alpha,\beta}
= \{p_1,\ldots,p_m\}$ be a finite $(\alpha,\beta)$-distribution cover for $\operatorname{co}(\mathcal F)$;\\
    Run the Exp3 algorithm on $\mathcal{I}_{\alpha,\beta}$ for $T$ rounds: at each round $t$, if Exp3 selects meta-arm $p_i$, sample $\pi_t \sim p_i$ and play arm $\pi_t$. Observe the reward $r_t$ and use it as the feedback for meta-arm $p_i$ in the Exp3.
\end{algorithm}
\begin{proof}[Proof of Theorem \ref{distribution_cover}]
First, define $\tilde{p}:=\arg\max_{p \in \mathcal{I}_{\alpha,\beta}}\frac{1}{T}\sum_{t=1}^T \mathbb{E}_{\pi \sim p}[f_t(\pi)]$. We decompose the regret of Algorithm \ref{upper_bound_algo3} as follows:
\begin{equation*}
    \begin{aligned}
        R_A(T)&=\mathbb{E}\left[\sup_{\pi^*}\sum_{t=1}^T (f_t(\pi^*)-f_t(\pi_t))\right]\\
        &=\mathbb{E}\left[\sup_{\pi^*}\sum_{t=1}^T (f_t(\pi^*)-\mathbb{E}_{\pi\sim \tilde{p}}[f_t(\pi)]+\mathbb{E}_{\pi\sim \tilde{p}}[f_t(\pi)]-f_t(\pi_t))\right]\\
        &\leq \mathbb{E}\left[\sup_{\pi^*}\sum_{t=1}^T (f_t(\pi^*)-\mathbb{E}_{\pi \sim \tilde{p}}[f_t(\pi)])\right]+\mathbb{E}\left[\sum_{t=1}^T (\mathbb{E}_{\pi \sim \tilde{p}}[f_t(\pi)]-f_t(\pi_t))\right]
    \end{aligned}
\end{equation*}

Define $\bar f=\frac{1}{T}\sum_{t=1}^T f_t \in \operatorname{co}(\mathcal{F})$, by the $(\alpha,\beta)$-distribution cover property, there exists $p^\star\in\mathcal I_{\alpha,\beta}$ such that $\mathbb{P}_{\pi \sim p^*}(\sup_{\pi^*}\bar f(\pi^*)-\bar f(\pi) \leq \alpha) \geq 1-\beta$. It follows that
\begin{equation*}
    \begin{aligned}
        \mathbb{E}_{\pi \sim  p^*}\left[\sup_{\pi^*}\bar  f(\pi^*)- \bar f(\pi) \right] \leq \mathbb{P}_{\pi \sim  p^*}&\left(\sup_{\pi^*}\bar f(\pi^*)-\bar f(\pi) \leq \alpha \right)\cdot\alpha+\\
        &\mathbb{P}_{\pi \sim  p^*}\left(\sup_{\pi^*}\bar f(\pi^*)-\bar f(\pi) > \alpha \right)\cdot 1 \leq \alpha+\beta.
    \end{aligned}
\end{equation*}
Since $\tilde p$ maximizes $\mathbb E_{\pi\sim p}[\bar f(\pi)]$ over $p\in\mathcal I_{\alpha,\beta}$, $\mathbb E_{\pi\sim \tilde p}[\bar f(\pi)]
\ge
\mathbb E_{\pi\sim p^\star}[\bar f(\pi)]
\ge
\sup_{\pi^*}\bar f(\pi^*)-(\alpha+\beta)$. Thus, the first term is bounded by $(\alpha+\beta)T$. The second term is the regret of Exp3 algorithm over the finite set $\mathcal{I}_{\alpha,\beta}$, which yields
        $$R_A(T)\leq(\alpha +\beta)T+2\sqrt{|\mathcal{I}_{\alpha,\beta}|T\log |\mathcal{I}_{\alpha,\beta}|}$$
Define $g(\alpha,\beta,T):=2\sqrt{\frac{|\mathcal{I}_{\alpha,\beta}|\log |\mathcal{I}_{\alpha,\beta}|}{T}}$, it remains to choose $\alpha,\beta>0$ as a function of $T$ so that the bound is sublinear in $T$, which is equivalent to show:
    $$\inf_{\alpha,\beta}\alpha+\beta +g(\alpha,\beta,T) \rightarrow 0$$

Since for any  fixed $\alpha,\beta>0$, the quantity $|\mathcal{I}_{\alpha,\beta}|$ is finite. Hence, $g(\alpha,\beta,T) \to 0$ as $T \to \infty$. Now fix any $\varepsilon>0$. Choose $\alpha,\beta>0$ such that
$\alpha+\beta \le \frac{\varepsilon}{2}$.
Since $|\mathcal I_{\alpha,\beta}|<\infty$, there exists $T_0$ such that for all
$T\ge T_0$, $g(\alpha,\beta,T)\leq \frac{\varepsilon}{2}$. Therefore, for all $T\ge T_0$, $\frac{R_A(T)}{T}
\leq \varepsilon$.
Since $\varepsilon>0$ was arbitrary, we conclude that
$\frac{R_A(T)}{T}\to 0$ as $T \to \infty$.
Therefore, Algorithm \ref{upper_bound_algo3} has sublinear regret in $T$.
\end{proof}

Finally, we prove Lemma \ref{conjecture1}, thereby completing the last part of the argument.

\begin{proof}[Proof of Lemma \ref{conjecture1}]
Let $\mathcal{G}=\operatorname{co}(\mathcal{F})$ and assume all the functions in $\mathcal{G}$ are measurable. Let $A_g^r= \{\pi \in \Pi : \sup_{\pi^*}g(\pi^*)-g(\pi) \le r\}$. Fix \(\alpha,\beta \in (0,1)\) and set $\rho=\frac{\alpha\beta}{8}$ and $\delta=\frac{\gamma_{\mathcal{G},\rho}}{2}$. Since $\gamma_{\mathcal{G},\rho} > 0$,
choose a distribution \(p \in \Delta(\Pi)\) such that $p(A_g^\rho) \ge \delta\; \forall g \in \mathcal{G}$.

Suppose, toward contradiction, that \(\mathcal{G}\) has no finite \((\alpha,\beta)\)-distribution cover. Then for every finite family of distributions \(q_1,\dots,q_N\), there exists \(g \in \mathcal{G}\) such that $q_i(A_g^\alpha) < 1-\beta,\;\forall i=1,\dots,N$. Now we fix $m$ to be chosen large enough later. We construct functions \(g_1,\dots,g_m \in \mathcal{G}\) inductively: Suppose \(g_1,\dots,g_{t-1}\) have already been chosen. Define indicator function $Y_s(\pi) := \mathbb{I}\{\pi \in A_{g_s}^\alpha\}$.
The variables \(Y_1,\dots,Y_{t-1}\) generate a finite partition of \(\Pi\) into atoms \(C\). For every atom \(C\) with \(p(C)>0\), define the conditional distribution $q_C(B) = \frac{p(B \cap C)}{p(C)}$.
There are only finitely many such distributions \(q_C\). By assumption, choose \(g_t \in \mathcal{G}\) such that $q_C(A_{g_t}^\alpha) < 1-\beta$,
for every positive-\(p\)-mass atom \(C\). Equivalently, $\mathbb E_p[Y_t \mid Y_1,\dots,Y_{t-1}] \le 1-\beta \; a.s.$

Now define the smaller-scale indicators $Z_t(\pi) := \mathbf 1\{\pi \in A_{g_t}^\rho\}$.
By assumption, $\mathbb E_p Z_t=p(A_{g_t}^\rho) \ge \delta\;\forall t$. We then have $\mathbb E_p \sum_{t=1}^m Z_t
\ge \delta m$.
Hence there exists some \(\pi_0 \in \Pi\) such that $\sum_{t=1}^m Z_t(\pi_0) \ge \delta m$.
Let $S := \{t \le m : \pi_0 \in A_{g_t}^\rho\}$.
Then $|S| \ge \delta m$. Namely, $\pi_0$ is $\rho$-optimal for all functions in $S$. Now average the functions indexed by \(S\), $h := \frac{1}{|S|}\sum_{t \in S} g_t$. Since \(\mathcal{G}\) is convex, we have \(h \in \mathcal{G}\).

Next, we will show within set $S$, if an arm $\pi$ is $\rho$-optimal for the averaged function $h$, then $\pi$ must be $\alpha$-optimal for almost all of the individual functions $g_t$ in $S$. Since both $\pi$ and $\pi_0$ are $\rho$-optimal for average function $h$, we have $|h(\pi)-h(\pi_0)|\leq \rho$. Equivalently, $|\sum_t g_t(\pi)-g_t(\pi_0)|\leq \rho|S|$. In addition, $\pi_0$ is $\rho$-optimal for any function $g_t$ in $S$, namely, $|\sup_{\pi^*}g_t(\pi^*)-g_t(\pi_0)|\leq \rho\;\forall t$. By triangle inequality, we have 
$$\frac{1}{|S|}\sum_{t \in S}(\sup_{\pi^*}g_t(\pi^*)-g_t(\pi)) \leq \frac{1}{|S|}\sum_{t\in S} (\sup_{\pi^*}g_t(\pi^*)-g_t(\pi_0)+g_t(\pi_0)-g_t(\pi))\leq 2\rho=\frac{\alpha \beta}{4}$$

By average argument, we then have $\frac{1}{|S|}
\left|
\{t \in S : \sup_{\pi^*}g_t(\pi^*)-g_t(\pi) > \alpha\}
\right|
\le
\frac{\alpha\beta/4}{\alpha}
=
\frac{\beta}{4}$. Equivalently, $\frac{1}{|S|}\sum_{t \in S} Y_t(\pi)
\ge
1-\frac{\beta}{4}.$ Therefore, we have $A_h^\rho
\subseteq
\left\{
\pi :
\sum_{t \in S} Y_t(\pi)
\ge
\left(1-\frac{\beta}{4}\right)|S|
\right\}$.

It remains to bound the probability of the event on the right. Let $\mu= 1-\beta$, $\theta= 1-\frac{\beta}{4}$.
Then \(\theta > \mu\). Recall $\mathbb E_p[Y_t \mid Y_1,\dots,Y_{t-1}]
\le \mu\;\forall t$. For any deterministic subset \(S \subseteq \{1,\dots,m\}\), the same conditional expectation domination holds along the subsequence indexed by \(S\). Therefore, by martingale chernoff bound (Lemma~\ref{martingale_chernoff_bound}),
\[
p\left(
\sum_{t \in S}Y_t \ge \theta |S|
\right)
\le
\exp\left(-D(\theta\|\mu)|S|\right),
\]
where $D(\theta\|\mu)=\theta \log\frac{\theta}{\mu}
+
(1-\theta)\log\frac{1-\theta}{1-\mu}$ is the binary relative entropy.
Now set $c_\beta=
D\left(
1-\frac{\beta}{4}
\,\middle\|\,
1-\beta
\right)>0$. We obtain $p(A_h^\rho)
\le
\exp(-c_\beta |S|)
\le
\exp(-c_\beta \delta m)$.
Choose \(m\) large enough that $\exp(-c_\beta \delta m) < \delta$.
Then $p(A_h^\rho) < \delta$. This is a contradiction. Therefore, \(\mathcal{G}\) must admit a finite \((\alpha,\beta)\)-distribution cover. Since this analysis works for any $\alpha,\beta \in (0,1)$, this completes the proof.
\end{proof}

Recall that Theorem~\ref{oblivious_adversary} establishes a necessary condition for learnability against oblivious adversaries, while Lemma~\ref{conjecture1} asserts that, for any function class $\mathcal{F}$, positivity of $\gamma_{\operatorname{co}(\mathcal{F}),\alpha}$ implies the finiteness of the distribution cover.  Together with Theorem~\ref{oblivious_adversary} and Lemma~\ref{conjecture1}, Theorem~\ref{distribution_cover} yields an elegant characterization of learnability for the adversarial noisy bandit problem, stated in Corollary \ref{corollary1}.

\section{Conclusion and Open Problems}

In this work, we study the learnability of the adversarial noisy bandit problem and establish a complete characterization of learnability in terms of a new complexity measure, the \emph{convexified generalized maximin volume}. Interestingly, our results show that learnability against oblivious adversaries and learnability against adaptive adversaries coincide in the noisy bandit setting. Along the way, we introduced two additional complexity measures, the \emph{hitting set} and the \emph{distribution covering number}. These notions may be of independent interest and could provide useful intuition for the study of other learning problems.

We conclude with two important open problems.

\begin{itemize}

    \item The first open question concerns the quantitative relationship between $\gamma_{\operatorname{co}(\mathcal{F}),\alpha}$ and $(\alpha,\beta)$-distribution covers. 
    In particular, can one establish a tight connection between $\gamma_{\operatorname{co}(\mathcal{F}),\alpha}$ and the minimal size of $(\alpha,\beta)$-distribution covers? 
    For example, is it always possible to construct a cover whose size scales on the order of $1/\gamma_{\operatorname{co}(\mathcal{F}),\alpha}$ (up to factors of $\alpha$)?
    \item Our second open question is to understand the spectrum of optimal regret in terms of the \emph{convexified generalized maximin volume} or the \emph{distribution covering number}. More specifically, can we construct, for every possible regret rate, a function class whose optimal regret attains that rate?
\end{itemize}

\begin{ack}
In the original version of this manuscript, the implication appearing in Lemma \ref{conjecture1} was stated as a conjecture, and no generative AI tools were used in preparing that version. After the original version was posted on arXiv, we obtained a proof of this implication with the assistance of ChatGPT 5.5 Pro. Consequently, the result has been changed from a conjecture to a lemma and is now stated as Lemma \ref{conjecture1}.

\end{ack}
\newpage
\bibliographystyle{abbrvnat}
\bibliography{ref}

\newpage
\appendix
\section{Auxiliary Lemma}


\begin{lemma}[Massart's Lemma]
\label{massart_lemma}
Let $\mathcal{A} \subseteq \mathbb{R}^m$ be a finite set, with $r = \max_{\mathbf{a} \in \mathcal{A}} \|\mathbf{a}\|_2$. Suppose \(m\) is even, and let
\(\sigma=(\sigma_1,\ldots,\sigma_m)\) be uniformly distributed over $\left\{\sigma\in\{-1,1\}^m:\sum_{i=1}^m \sigma_i=0\right\}$.
Then
$$
\mathbb{E}
\left[
\sup_{\mathbf{a} \in \mathcal{A}}
\sum_{i=1}^m \sigma_i a_i
\right]
\leq
r \sqrt{2 \log |\mathcal{A}|},$$
where $a_i$ denotes the $i$-th component of the vector $\mathbf{a}$.\footnote{Although Massart’s lemma is typically stated for independent random variables, the same bound also applies to sampling without replacement. Since they are negatively dependent, it is no less concentrated 
than independent sampling with the same marginals.} 
\end{lemma}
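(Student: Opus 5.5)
We bound $\mathbb{E}_\sigma \sup_{\mathbf a}\sum_i \sigma_i a_i$ by the standard exponential-moment (soft-max) argument. The only change from the i.i.d.\ Rademacher case is that the coordinates $\sigma_i$ are now a uniformly random balanced sign vector. So the one new ingredient is a bound on the moment generating function $\mathbb{E}\exp(\lambda\sum_i\sigma_i a_i)$ for sampling without replacement.

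Concretely, fix $\lambda>0$ and apply Jensen's inequality to the convex map $x\mapsto e^{\lambda x}$:
\[
\exp\Bigl(\lambda\,\mathbb{E}\sup_{\mathbf a\in\mathcal A}\textstyle\sum_i\sigma_i a_i\Bigr)\le \mathbb{E}\sup_{\mathbf a}e^{\lambda\sum_i\sigma_i a_i}\le \sum_{\mathbf a\in\mathcal A}\mathbb{E}\,e^{\lambda\sum_i\sigma_i a_i}.
\]
If each term is at most $e^{\lambda^2 r^2/2}$, then taking logarithms gives $\mathbb{E}\sup\le \frac{\log|\mathcal A|}{\lambda}+\frac{\lambda r^2}{2}$. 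The choice $\lambda=\sqrt{2\log|\mathcal A|}/r$ then yields $r\sqrt{2\log|\mathcal A|}$. The degenerate cases $|\mathcal A|=1$, where the expectation is $0$ because $\mathbb{E}\sigma_i=0$, and $r=0$ are trivial.

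The main step, and the obstacle, is the sub-Gaussian MGF bound for a fixed $\mathbf a$: $\mathbb{E}\,e^{\lambda\sum_i\sigma_i a_i}\le e^{\lambda^2\|\mathbf a\|_2^2/2}$.

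\textbf{Attempt 1: Hoeffding's convex-ordering comparison.} Write $\sum_i\sigma_i a_i$ as a linear statistic of a uniformly random permutation. Hoeffding's theorem states that for any continuous convex $\phi$, the expectation of $\phi$ of a sum sampled without replacement is at most the corresponding expectation with replacement. To apply it, encode $\sigma$ as choosing a uniformly random subset $J$ of size $m/2$ to receive sign $+1$. Then
\[
\sum_i\sigma_i a_i=2\sum_{i\in J}a_i-\sum_i a_i,
\]
which is an affine function of a without-replacement sum from the population $\{a_i\}$.

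The difficulty is that Hoeffding's comparison replaces this sum by an i.i.d.\ sample from the uniform distribution on $\{a_1,\dots,a_m\}$, not by independent Rademacher signs on the fixed coefficients. The resulting sub-Gaussian constant comes from Hoeffding's lemma applied to the range $\max a_i-\min a_i$, not from $\|\mathbf a\|_2$. So this route does not directly give the stated constant.

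\textbf{Attempt 2: pairing/symmetrization.} I would use the random-permutation representation instead. Draw a uniformly random perfect matching of $[m]$ into $m/2$ pairs, and give each pair the signs $(+1,-1)$ or $(-1,+1)$ via an independent fair coin $\epsilon_k$. This produces exactly the uniform balanced sign vector. Conditionally on the matching $\{(i_k,j_k)\}$, the sum equals $\sum_k\epsilon_k(a_{i_k}-a_{j_k})$, a Rademacher sum with independent signs. Hence
\[
\mathbb{E}\bigl[e^{\lambda\sum_i\sigma_ia_i}\mid\text{matching}\bigr]\le \exp\Bigl(\tfrac{\lambda^2}{2}\textstyle\sum_k(a_{i_k}-a_{j_k})^2\Bigr).
\]
This yields a sub-Gaussian proxy of $\sum_k(a_{i_k}-a_{j_k})^2\le 2\|\mathbf a\|_2^2$, which loses a factor $\sqrt2$.

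To recover the sharp constant, I would instead condition on the matching in the soft-max bound, apply Massart to the independent signs $\epsilon$ for each fixed matching, and average over matchings. For each fixed matching the bound is
\[
\mathbb{E}_\epsilon\sup_{\mathbf a}\sum_k\epsilon_k(a_{i_k}-a_{j_k})\le \sqrt{2\log|\mathcal A|}\,\max_{\mathbf a}\Bigl(\textstyle\sum_k(a_{i_k}-a_{j_k})^2\Bigr)^{1/2}.
\]
Removing the residual factor of $\sqrt 2$ in the radius is the delicate part. If the pairing argument cannot close it, I would fall back on the paper's footnote justification: for negatively associated $\sigma_i$, the exponential moment of $\sum_i\sigma_ia_i$ (splitting into positive and negative $a_i$) is dominated by that of the independent case with the same marginals, so the Rademacher MGF bound $e^{\lambda^2\|\mathbf a\|^2/2}$ transfers directly. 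Establishing that domination rigorously is the main obstacle.
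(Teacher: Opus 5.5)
Your argument never closes, and the step you single out as the obstacle cannot be closed. The MGF bound $\mathbb{E}e^{\lambda\sum_i\sigma_i a_i}\le e^{\lambda^2\|\mathbf a\|_2^2/2}$ is false for signed vectors, and so is the lemma as stated. For a uniform balanced $\sigma$, the identity $0=\mathbb{E}(\sum_i\sigma_i)^2$ forces $\mathbb{E}\sigma_i\sigma_j=-\frac{1}{m-1}$ for $i\neq j$, hence
\[
\mathbb{E}\Bigl(\sum_i\sigma_i a_i\Bigr)^2=\frac{m}{m-1}\|\mathbf a\|_2^2-\frac{1}{m-1}\Bigl(\sum_i a_i\Bigr)^2 .
\]
This exceeds $\|\mathbf a\|_2^2$ whenever $\sum_i a_i=0$ and $\mathbf a\neq 0$, so a second-order expansion in $\lambda$ gives $\mathbb{E}e^{\lambda\sum_i\sigma_ia_i}>e^{\lambda^2\|\mathbf a\|_2^2/2}$ for small $\lambda>0$. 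Concretely, take $m=2$ and $\mathcal A=\{(1,-1),(-1,1)\}$: the supremum equals $2$ for both balanced $\sigma$, while $r\sqrt{2\log|\mathcal A|}=2\sqrt{\log 2}\approx 1.67$. Likewise $m=4$ with $\mathcal A$ the six balanced sign vectors gives $4>2\sqrt{2\log 6}\approx 3.79$. So some loss relative to the i.i.d.\ constant is unavoidable in general.

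Your fallback is the negative-dependence heuristic of the footnote, which is the only justification the paper offers, and it fails for mixed signs. The balanced vector is indeed negatively associated (it is a permutation distribution, by Joag-Dev and Proschan). But when $a_i>0>a_j$, the maps $\sigma_i\mapsto e^{\lambda a_i\sigma_i}$ and $\sigma_j\mapsto e^{\lambda a_j\sigma_j}$ are monotone in opposite directions, so negative association makes them positively correlated. Splitting $\mathbf a$ into positive and negative parts and recombining by Cauchy--Schwarz only yields $e^{\lambda^2\|\mathbf a\|_2^2}$, i.e.\ the same $\sqrt2$ loss as your pairing.

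What is true is the lemma restricted to $\mathcal A\subseteq[0,\infty)^m$, and that is all the paper actually uses: in Example~\ref{example2} the vectors are $(p_k(A_i))_{i\in R}$, which are nonnegative. There, for $\lambda>0$, every map $\sigma_i\mapsto e^{\lambda a_i\sigma_i}$ is nonnegative and nondecreasing. Negative association then gives $\mathbb{E}e^{\lambda\sum_i\sigma_ia_i}\le\prod_i\cosh(\lambda a_i)\le e^{\lambda^2\|\mathbf a\|_2^2/2}$, and your soft-max computation finishes with the sharp constant.

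For general signed $\mathcal A$, your pairing argument is already a complete proof of the weaker bound $\mathbb{E}\sup_{\mathbf a\in\mathcal A}\sum_i\sigma_ia_i\le 2r\sqrt{\log|\mathcal A|}$. The random-matching construction does produce the uniform balanced law, by permutation invariance and transitivity. That weaker bound would also suffice in Example~\ref{example2} after enlarging $M$.
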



\begin{lemma}[Markov's Inequality]
\label{markov_inequality}
Let $X$ be a nonnegative random variable. Then, for any $a>0$,
$$\mathbb{P}(X \geq a) \leq \frac{\mathbb{E}[X]}{a}.$$
\end{lemma}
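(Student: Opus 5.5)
We prove Markov's inequality by a pointwise comparison followed by monotonicity and linearity of expectation.

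\textbf{Pointwise bound.} Fix $a>0$ and consider the indicator $\mathbb{I}[X\geq a]$. We claim that for every outcome
$$a\,\mathbb{I}[X\geq a]\leq X.$$
There are two cases.
\begin{itemize}
\item If $X\geq a$, the left-hand side equals $a$, which is at most $X$.
\item If $X<a$, the left-hand side equals $0$, which is at most $X$ because $X\geq 0$.
\end{itemize}
This second case is the only place nonnegativity of $X$ is used. Without it the inequality fails, e.g.\ when $X$ takes a large negative value.

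\textbf{Taking expectations.} Both sides of the pointwise bound are measurable and nonnegative, so their expectations exist in $[0,\infty]$. Monotonicity of expectation gives
$$\mathbb{E}\bigl[a\,\mathbb{I}[X\geq a]\bigr]\leq \mathbb{E}[X].$$
By linearity, the left-hand side equals $a\,\mathbb{P}(X\geq a)$. Dividing by $a>0$ yields
$$\mathbb{P}(X\geq a)\leq \frac{\mathbb{E}[X]}{a}.$$
If $\mathbb{E}[X]=\infty$, the right-hand side is $\infty$ and the statement holds trivially, so no integrability assumption is needed.

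There is no real obstacle in this argument. The only points that need care are the use of $X\geq 0$ in the case $X<a$, and admitting $\mathbb{E}[X]=\infty$, which the argument covers.

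In the proof of Theorem~\ref{lower_bound_learnability}, the inequality is applied to the nonnegative variable $X=\sup_{\pi}\bar f(\pi)-\bar f(\hat\pi')$ with threshold $a=\alpha$. Since $\mathbb{E}[X]\leq\alpha/2$, this gives $\mathbb{P}(X>\alpha)\leq\mathbb{P}(X\geq\alpha)\leq 1/2$, and hence $\mathbb{P}(X\leq\alpha)\geq 1/2$.
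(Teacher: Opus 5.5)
Your proof is correct: it is the standard pointwise bound $a\,\mathbb{I}[X\geq a]\leq X$ followed by taking expectations, and you handle the case $\mathbb{E}[X]=\infty$ properly. The paper lists Markov's inequality as a standard auxiliary lemma and gives no proof, so there is no alternative route to compare. Your remark on its use in Theorem~\ref{lower_bound_learnability}, passing from $\mathbb{P}(X\geq\alpha)\leq 1/2$ to $\mathbb{P}(X\leq\alpha)\geq 1/2$, is also correct.
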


\begin{lemma}[Chernoff bound under conditional domination]
\label{martingale_chernoff_bound}
Let \(X_1,\dots,X_k\) be \(\{0,1\}\)-valued random variables adapted to
a filtration \((\mathcal F_j)_{j=0}^k\). Suppose that for some \(\mu\in(0,1)\), $\mathbb E[X_j\mid \mathcal F_{j-1}] \le \mu
\;\forall j$
Then for every \(\theta\in(\mu,1)\),
\[
\mathbb{P}\left(\sum_{j=1}^k X_j\ge \theta k\right)
\le
\exp\left(-D(\theta\|\mu)k\right),
\]
where $D(\theta\|\mu)=\theta\log\frac{\theta}{\mu}+(1-\theta)\log\frac{1-\theta}{1-\mu}$ denotes the binary relative entropy.
\end{lemma}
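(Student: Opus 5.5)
We prove the martingale Chernoff bound (Lemma~\ref{martingale_chernoff_bound}) by the standard exponential-moment method. The independent case uses a product of moment generating functions. Here we replace it with a supermartingale built from the conditional domination hypothesis, and then optimize the exponent to get the binary relative entropy.

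\textbf{Step 1: one-step MGF bound.} Fix $\lambda>0$. Since $X_j\in\{0,1\}$, we have $e^{\lambda X_j}=1+(e^\lambda-1)X_j$ identically. Taking conditional expectations and using $\mathbb{E}[X_j\mid\mathcal F_{j-1}]\le\mu$ together with $e^\lambda-1>0$ gives
\[
\mathbb{E}\bigl[e^{\lambda X_j}\mid \mathcal F_{j-1}\bigr]\le 1-\mu+\mu e^{\lambda}.
\]
The right-hand side is exactly the MGF of a $\mathrm{Bernoulli}(\mu)$ variable.

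\textbf{Step 2: iterate via the tower property.} Set $S_j=\sum_{i\le j}X_i$. Because $S_{j-1}$ is $\mathcal F_{j-1}$-measurable, we get $\mathbb{E}[e^{\lambda S_j}]=\mathbb{E}\bigl[e^{\lambda S_{j-1}}\mathbb{E}[e^{\lambda X_j}\mid\mathcal F_{j-1}]\bigr]\le (1-\mu+\mu e^\lambda)\,\mathbb{E}[e^{\lambda S_{j-1}}]$. Induction on $j$ then yields $\mathbb{E}[e^{\lambda S_k}]\le(1-\mu+\mu e^\lambda)^k$. Markov's inequality (Lemma~\ref{markov_inequality}) applied to $e^{\lambda S_k}$ gives
\[
\mathbb{P}(S_k\ge\theta k)\le \exp\Bigl(-k\bigl[\lambda\theta-\log(1-\mu+\mu e^\lambda)\bigr]\Bigr).
\]

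\textbf{Step 3: optimize over $\lambda$.} Choose
\[
e^{\lambda}=\frac{\theta(1-\mu)}{\mu(1-\theta)}.
\]
This satisfies $\lambda>0$ precisely because $\theta>\mu$. With this choice, $1-\mu+\mu e^\lambda=\frac{1-\mu}{1-\theta}$, so the exponent becomes
\[
\theta\log\frac{\theta(1-\mu)}{\mu(1-\theta)}-\log\frac{1-\mu}{1-\theta}=\theta\log\frac{\theta}{\mu}+(1-\theta)\log\frac{1-\theta}{1-\mu}=D(\theta\|\mu).
\]
This is the claimed bound.

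There is no serious obstacle. The only step needing care is Step 1: the inequality must hold conditionally. This works because the exact linearization $e^{\lambda x}=1+(e^\lambda-1)x$ on $\{0,1\}$ turns conditional domination of the mean directly into conditional domination of the MGF.

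The application in the proof of Lemma~\ref{conjecture1} uses a deterministic subsequence indexed by $S$. That case is covered by the same argument with the filtration restricted to $S$: by the tower property, for $t\in S$ we have $\mathbb{E}[Y_t\mid (Y_s)_{s\in S,s<t}]\le\mu$, since conditioning on a coarser $\sigma$-field preserves the bound.
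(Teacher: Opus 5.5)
Your proposal is correct and follows the same route as the paper: bound the conditional moment generating function by $1-\mu+\mu e^{\lambda}$, iterate via the tower property, apply Markov's inequality, and optimize over $\lambda>0$. You also spell out details the paper leaves implicit, namely the identity $e^{\lambda x}=1+(e^{\lambda}-1)x$ on $\{0,1\}$, the explicit optimizer $e^{\lambda}=\theta(1-\mu)/(\mu(1-\theta))$, and the justification for restricting to a deterministic subsequence.
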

\begin{proof}
For every \(\lambda > 0\), $\mathbb E \exp\left(\lambda \sum_{i}X_i\right)
\le
(1-\mu+\mu e^\lambda)^{k}$ by iterating the conditional expectation bound. Markov's inequality gives $\mathbb{P}\left(
\sum_{i}X_i \ge \theta k
\right)
\le
\exp(-\lambda \theta k)
(1-\mu+\mu e^\lambda)^{k}$. Optimizing over \(\lambda>0\) yields the desired inequality.
\end{proof}
\section{Exp3 Algorithm}
\label{exp3_section}

\begin{algorithm}[H]
\caption{Exp3 Algorithm}
\label{exp3algorithm}
\KwIn{Number of arms $K$, Learning rate $\eta > 0$, Time Horizon $T$}
Set $\hat{S}_{0,i}=0$ for all $i \in [K]$.

\For{$t=1,2,\dots,T$}{
    
    $p_{t,i}=\frac{\exp(\eta \hat{S}_{t-1,i})}{\sum_{j \in [K]} \exp(\eta \hat{S}_{t-1,j})}$;

    Sample an arm $I_t \sim p_t$, receive $X_t$;

    Update $\hat{S}_{t,i}=\hat{S}_{t-1,i}+(1-\frac{\mathbb{I}[I_t=i]}{p_{t,i}}(1-X_t))$.
}
\end{algorithm}
\begin{lemma}[Exp3 Regret Guarantee, \cite{auer2002nonstochastic,lattimore2020bandit}]
\label{exp3_guarantee}
Let $T$ be time horizon, $K$ be number of arms and $\pi$ be the policy of Exp3 Algorithm (Algorithm \ref{exp3algorithm}) with learning rate $\eta=\sqrt{\log K/(TK)}$. Then the expected regret of policy $\pi$ satisfies
$$R_A(T)\leq 2\sqrt{KT\log K}$$ 
\end{lemma}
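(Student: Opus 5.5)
The plan is the standard exponential-weights potential argument of \citet{lattimore2020bandit}, written for the loss-based estimator of Algorithm~\ref{exp3algorithm}.

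\textbf{Setup and estimator.} Let $\mathcal{H}_{t-1}$ be the history before round $t$, and let $x_{t,i}\in[0,1]$ be the mean reward of arm $i$ at round $t$. For an adaptive adversary, $f_t$, and hence $x_{t,\cdot}$, is $\mathcal{H}_{t-1}$-measurable. Set $y_{t,i}=1-x_{t,i}$ and $\hat Y_{t,i}=\mathbb{I}[I_t=i](1-X_t)/p_{t,i}\ge 0$. Then $\hat X_{t,i}:=1-\hat Y_{t,i}$ is exactly the increment in the update of $\hat S$. I will first check two moment facts. Since $I_t$ and the noise in $X_t$ are drawn after $x_{t,\cdot}$ is fixed, $\mathbb{E}[\hat Y_{t,i}\mid\mathcal{H}_{t-1}]=y_{t,i}$. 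Using $(1-X_t)^2\le 1$ for arbitrary $[0,1]$-valued noise, $\mathbb{E}[\sum_i p_{t,i}\hat Y_{t,i}^2\mid \mathcal{H}_{t-1}]\le \sum_i y_{t,i}\le K$. The bandit noise therefore enters only through these two facts, and neither is affected by it.

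\textbf{Potential argument.} Let $W_t=\sum_i \exp(\eta \hat S_{t,i})$, so $W_0=K$. For any fixed arm $i$ we have $\log(W_T/W_0)\ge \eta \hat S_{T,i}-\log K$. For the upper bound, write $W_t/W_{t-1}=\sum_i p_{t,i}e^{\eta\hat X_{t,i}}=e^{\eta}\sum_i p_{t,i}e^{-\eta\hat Y_{t,i}}$. Because $\hat Y\ge 0$, the bound $e^{-y}\le 1-y+y^2/2$ applies, and together with $\log(1+u)\le u$ it gives
\[
\log\frac{W_t}{W_{t-1}}\le \eta\sum_i p_{t,i}\hat X_{t,i}+\frac{\eta^2}{2}\sum_i p_{t,i}\hat Y_{t,i}^2 .
\]
Summing over $t$, comparing with the lower bound, and taking expectations via the tower rule yields
\[
\mathbb{E}\Big[\sum_t x_{t,i}-\sum_t x_{t,I_t}\Big]\le \frac{\log K}{\eta}+\frac{\eta K T}{2}.
\]
Here I use $\mathbb{E}[\sum_i p_{t,i}\hat X_{t,i}\mid\mathcal{H}_{t-1}]=\mathbb{E}[x_{t,I_t}\mid \mathcal{H}_{t-1}]$ and $\mathbb{E}\hat S_{T,i}=\mathbb{E}\sum_t x_{t,i}$. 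With $\eta=\sqrt{\log K/(TK)}$ the right side equals $\tfrac32\sqrt{KT\log K}\le 2\sqrt{KT\log K}$.

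\textbf{Main obstacle.} The argument above bounds regret against each fixed arm $i$, i.e.\ $\max_i\mathbb{E}[\cdot]$ (pseudo-regret). The definition of $R_A(T)$, however, places $\sup_{\pi^*}$ inside the expectation. Passing from $\mathbb{E}\hat S_{T,i}$ to $\mathbb{E}\max_i\sum_t x_{t,i}$ is the delicate step, since $\hat S_{T,i}-\sum_t x_{t,i}$ are martingales whose increments can be as large as $1/p_{t,i}$.

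I would first check whether this stronger form is actually needed where the lemma is invoked. In the proofs of Theorems~\ref{upper_bound_learnability} and~\ref{distribution_cover} the Exp3 term is compared against a single comparator ($\tilde\pi$ or $\tilde p$), after the $\sup_{\pi^*}$ part has been split off. If that comparator is fixed in advance, the fixed-arm bound suffices. If instead the comparator depends on the realized sequence, as $\tilde\pi$ does under an adaptive adversary, the fallback is a high-probability variant. Either add implicit exploration (Exp3-IX, with $\hat Y$ replaced by the estimate with $p_{t,i}+\gamma$ in the denominator), or use a Freedman-type bound uniformly over the $K$ arms. Both control $\max_i(\sum_t x_{t,i}-\hat S_{T,i})$ at the cost of an extra $O(\sqrt{KT\log K})$, which preserves the sublinear rate even if it changes the constant $2$.
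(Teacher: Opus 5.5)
Your potential argument is the standard proof behind the citation (the Exp3 theorem in \citet{lattimore2020bandit}). It uses the same loss-based estimator, the same telescoping of $W_t$ and the same $K$ bound on the second moment. Your sharper inequality $e^{-y}\le 1-y+y^2/2$ only improves the constant from $2$ to $\tfrac32$. One small correction: the intermediate bound $\sum_i y_{t,i}$ needs $(1-X_t)^2\le 1-X_t$ rather than $(1-X_t)^2\le 1$. For oblivious adversaries, which is the setting of the cited result and of Theorem~\ref{upper_bound_learnability}, the comparator's cumulative reward is deterministic given the sequence (and the pre-sampled arms). So your fixed-arm bound is exactly the lemma there.

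The obstacle you flag is real, and it is a gap in how the paper uses the lemma, not in your proof. In Theorems~\ref{countable_sufficient} and~\ref{distribution_cover}, the comparator $\tilde\pi$ or $\tilde p$ is an argmax over the adaptively generated $f_t$, so those proofs need the bound with the maximum inside the expectation. That stronger form appears to be false for Algorithm~\ref{exp3algorithm}, so a sharper analysis of vanilla Exp3 cannot supply it. In particular, your Freedman route cannot work on this estimator, because its increments $1/p_{t,i}$ are unbounded without explicit exploration.

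Here is a sketch of a counterexample with $K=2$:
\begin{itemize}
\item Pay arm~1 reward $1$, and choose arm~2's reward adaptively so that $p_{t,2}\approx 10/T$ after $o(T)$ rounds. This is possible because the gap $\hat S_{t,1}-\hat S_{t,2}$ jumps by exactly $y_t/p_{t,2}$ at each pull of arm~2, and the adversary chooses $y_t$.
\item Then pay arm~2 nothing. With probability about $1-e^{-1/2}$, arm~2 is pulled within the next $T/20$ rounds, which lowers $\hat S_{t,2}$ by about $T/10$.
\item At that moment the adversary swaps the two arms' rewards for the rest of the game.
\end{itemize}
Arm~2 ends up best in hindsight. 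But Exp3 spends about $T/10$ further rounds on arm~1 (reward $0$) before $\hat S_{t,1}-\hat S_{t,2}$ returns to $0$. So the realized regret is at least about $T/20$ on this event, and $\mathbb{E}[\max_i(\cdot)]=\Omega(T)$, even though $\max_i\mathbb{E}[\cdot]$ still obeys your bound.

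Your Exp3-IX alternative (or Exp3.P) is the right repair. Its high-probability guarantee holds against adaptive adversaries and integrates to an $O(\sqrt{KT\log K})$ bound with the maximum inside the expectation. That is all the learnability theorems need, though it changes both the algorithm and the constant in the lemma.
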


\section{Remarks on the role of convexity in Lemma~\ref{conjecture1}}
Our Lemma~\ref{conjecture1} states that $\gamma_{\operatorname{co}(\mathcal F),\alpha}>0$ implies there exist finite distribution covers.  We remark that a proof of this lemma must necessarily rely on the convexity of $\operatorname{co}(\mathcal F)$, as we can show that it fails to hold for general non-convex function classes $\mathcal F$.
Specifically, Example~\ref{example2} below shows that there exists a function class $\mathcal{F}$ with $\gamma_{\mathcal F,\alpha}>0$,  $\gamma_{\operatorname{co}(\mathcal F),\alpha}=0$ and with infinite distribution covering number. This shows that convexity must play an essential role in the proof. In other words, Lemma~\ref{conjecture1} can hold, at best, only for convex function classes.
\begin{example}
\label{example2}
Let the arm space be $\Pi=[0,1]$ equipped with the Lebesgue measure $\lambda$. Consider the ternary function class
$\mathcal F_2 := \left\{ f:\Pi\to\{0,\frac{1}{2},1\}  \middle|\lambda\bigl(\{f(\pi)=1\}\bigr)
=\lambda\bigl(\{f(\pi)=0\}\bigr)
=\frac13
\right\}$.

We first show $\gamma_{\mathcal F_2,\alpha} > 0$. For any $f \in \mathcal F_2$, exactly $\frac{1}{3}$ of the domain satisfies $f(\pi)=1$. Thus, under the uniform distribution $u$ over $\Pi$, $\mathbb P_{\pi \sim u}\bigl(f(\pi)=1\bigr)=\tfrac13$. Hence we have $\gamma_{\mathcal F_2,\alpha} \ge \tfrac13 > 0\;\forall \alpha \in (0,\frac{1}{2})$.

We next show that this positivity is not preserved under convexification. Namely, we will show $\gamma_{\operatorname{co}(\mathcal F_2),\alpha} = 0$. Fix any $\pi \in \Pi$. Choose two functions $f_1,f_2 \in \mathcal F_2$ such that $f_1^{-1}(\{1\}) \cap f_2^{-1}(\{1\}) = \{\pi\}$
and $f_1^{-1}(\{\frac{1}{2}\}) = f_2^{-1}(\{\frac{1}{2}\})$.
Define $\tilde f := \tfrac12(f_1 + f_2) \in \operatorname{co}(\mathcal F_2)$. Then we have $\tilde f(\pi)=1$ and for all $\bar\pi \neq \pi$, we have $\tilde f(\bar\pi)=\tfrac12$. Therefore, for any distribution $p$, there exists some arm $\tilde{\pi}$ with probability mass $0$ from $p$ and there is some function $\tilde{f}$ such that $\tilde{f}(\tilde{\pi})=1$ and every $\pi \neq \tilde{\pi}$ has $\tilde{f}(\pi) = \frac{1}{2}$. Thus $\mathbb P_{\pi \sim p}\bigl(\sup_{\pi^*} \tilde{f}(\pi^*)- \tilde f(\pi) \leq  \alpha\bigr)=0$,
$\forall \alpha \in (0,\frac{1}{2})$. This implies $\gamma_{\operatorname{co}(\mathcal{F}_2),\alpha}=0\;\forall \alpha \in (0,\frac{1}{2})$.


Finally, we show there is no finite $(\alpha,\beta)$-distribution cover for $\mathcal{F}_2\;\forall \alpha \in (0,\frac{1}{2}),\forall \beta \in (0,\frac{1}{3})$. Together with $\gamma_{\mathcal{F}_2,\alpha}>0$ and $\gamma_{\operatorname{co}(\mathcal{F}_2),\alpha}=0$, this shows that convexity is essential for Lemma~\ref{conjecture1}. 

For the purpose of analysis, we consider auxiliary function class $\mathcal F_3 := \left\{ f:\Pi\to\{-1,0,1\}  \middle|\lambda\bigl(\{f(\pi)=1\}\bigr)
=\lambda\bigl(\{f(\pi)=-1\}\bigr)
=\frac13
\right\}$. Suppose, for contradiction, that there exists a finite distribution cover $\mathcal S = \{p_1,\dots,p_N\} \subset \Delta(\Pi)$. Partition $[0,1]$ into $3^M$ equal subintervals $A_1,\dots,A_{3^M}$, where $M$ is sufficiently large such that $N3^{2M/3}\leq3^{M-1}$ and $\log N \cdot 3^{-\frac{M}{3}}\leq \frac{1}{50}$. We first perform pruning step: remove all indices $i$ such that $\max_{k} p_k(A_i) > 3^{-2M/3}$. The number of removed intervals is bounded by
$$\#\Bigl\{i:\max_k p_k(A_i)>3^{-2M/3}\Bigr\}
\le \sum_{k=1}^N \#\left\{i:p_k(A_i)>3^{-2M/3}\right\}
\le N3^{2M/3} \leq 3^{M-1}$$
Next, we choose any $2\cdot3^{M-1}$ intervals from the remaining intervals and denote this set of indices $i$ by $R$, and let \(B := [3^M]\setminus R\). Then we do random construction: let $I$ be uniformly random (without replacement) half of the set $R$, and define
\[
f(\pi)
=
\begin{cases}
1, & \pi\in \bigcup_{i\in I}A_i,\\
-1, & \pi\in \bigcup_{i\in R\setminus I}A_i,\\
0, & \pi\in \bigcup_{i\in B}A_i.
\end{cases}
\] 
We can see $f \in \mathcal{F}_3$.  Then focus on the set $R$:
    \begin{equation*}
        \begin{aligned}
        \mathbb{E}\!\left[\max_k \int_R f d p_k\right] 
            &=\mathbb{E}\!\left[\max_k \sum_{i\leq 3^M, i \notin B}p_k(A_i)(2\mathbb{I}[i \in I]-1)\right] \\
            & \leq \sqrt{2\log N} \max_k \sqrt{\left(\sum_{i\leq 3^M, i\notin B}p_k(A_i)^2 \right)} 
            \\ &\leq \sqrt{2\log N}  \sqrt{3^M 3^{-\frac{4}{3}M}} 
            \leq\sqrt{2\log N}  \sqrt{3^{-\frac{M}{3}}} 
            \leq \frac{1}{5}.
        \end{aligned}
    \end{equation*}
    The first inequality is due to Massart's Lemma (Lemma \ref{massart_lemma}). The last inequality is as $\log N\cdot 3^{-\frac{M}{3}}\leq \frac{1}{50}$. Therefore, there exists a function $f \in \mathcal{F}_3$ such that $\max_k \int_R f dp_k \leq \frac{1}{5}$. 

Last, we rescale to $\{0,\frac{1}{2},1\}$-valued function: we choose $f'(x)=\frac{f(x)+1}{2} \in \{0,\frac{1}{2},1\}$, and note that $f' \in \mathcal{F}_2$. For every $\alpha \in (0,\frac{1}{2})$,  $$p_k\left(\sup_{\pi^*}f'(\pi^*) - f'(\pi) \le \alpha\right)=p_k(f'=1)=\int_R f'dp_k=\frac{1}{2}p_k(R)+\frac{1}{2}\int_R f d p_k\leq\frac{1}{2}\cdot1+\frac{1}{10}=\frac{3}{5}.$$ Therefore, for $\alpha<\frac{1}{2},\beta<\frac{1}{3}$, this $f'$ is not covered by $\mathcal{S}$.  Since this argument holds for any possible finite $\mathcal{S}$, we conclude there is no finite $(\alpha,\beta)$-distribution cover.
\end{example}

\section{Proof of Theorem \ref{countable_sufficient}}
\begin{proof}
Define $\tilde \pi:=\arg\max_{\pi \in \mathcal{H}_{\alpha}} \frac{1}{T} \sum_{t=1}^T f_t(\pi)$. We decompose the regret of Algorithm \ref{upper_bound_algo2} as follows:
\begin{equation*}
    \begin{aligned}
        R_A(T)&=\mathbb{E}\left[\sup_{\pi^*}\sum_{t=1}^T (f_t(\pi^*)-f_t(\pi_t))\right]\\
        &=\mathbb{E}\left[\sup_{\pi^*}\sum_{t=1}^T (f_t(\pi^*)-f_t(\tilde\pi)+f_t(\tilde\pi)-f_t(\pi_t))\right]\\       
        &\leq \mathbb{E}\left[\sup_{\pi^*}\sum_{t=1}^T (f_t(\pi^*)-f_t(\tilde\pi))\right]+\mathbb{E}\left[\sum_{t=1}^T (f_t(\tilde\pi)-f_t(\pi_t))\right]\\
    \end{aligned}
\end{equation*}
Define $\bar f=\frac{1}{T}\sum_{t=1}^T f_t \in \operatorname{co}(\mathcal{F})$, by the hitting set property, there exists $\pi' \in\mathcal H_{\alpha}$ such that $\sup_{\pi^*} \bar f(\pi^*)- \bar f(\pi') \leq \alpha$. Since $\tilde \pi$ maximizes $\bar f(\pi)$ over $ \pi \in\mathcal H_{\alpha}$, it follows that $\bar f(\tilde \pi) \ge \bar f(\pi')\geq\sup_{\pi^*}\bar f(\pi^*)-\alpha$. Thus, the first term is bounded by $\alpha T$. The second term is bounded by Exp3 algorithm guarantee. Therefore,
$$R_A(T) \leq \alpha T+2\sqrt{|\mathcal{H}_{\alpha}|T\log |\mathcal{H}_{\alpha}|}$$

    For simplicity of presentation, we define $g(\alpha,T):=2\sqrt{\frac{|\mathcal{H}_{\alpha}|\log |\mathcal{H}_{\alpha}|}{T}}$. We want to show by choosing appropriate $\alpha$ depending on $T$, the expression above is sublinear in $T$, which is equivalent to showing:
    $$\inf_{\alpha}\alpha +g(\alpha,T) \rightarrow 0$$
    Define set $A_{T}:=\{\alpha \in (0,1):\alpha>g(\alpha,T)\}$.
    Note that, for any $\alpha \in A_T$ and $\alpha' > \alpha$, 
    since $|\mathcal{H}_{\alpha'}|\leq |\mathcal{H}_{\alpha}|$, we have $\alpha' > \alpha > g(\alpha,T) \geq g(\alpha',T)$, so that $\alpha' \in A_T$ as well.
    We choose $\alpha_T=\frac{1}{T}+ \inf A_T$. 
    Since for any fixed $\alpha \in (0,1)$, when $T \rightarrow \infty$, $g(\alpha,T) \rightarrow 0$, we have $\alpha \in A_{T}$ for all sufficiently large $T$. Since this is true of any $\alpha \in (0,1)$, we have $A_T \to (0,1)$ as $T \to \infty$.  Therefore, when $T \rightarrow \infty$, $\alpha_T \rightarrow 0$.  Since $\alpha_T > \inf A_T$, we have $\alpha_T \in A_T$, so that $\alpha_T > g(\alpha_T,T)$, and hence $g(\alpha_T,T) \to 0$ as $T \to \infty$ as well.
    This implies the regret upper bound of Algorithm \ref{upper_bound_algo2} is sublinear in $T$. 
\end{proof}



\section{$\sqrt{T}$ Lower Bound for Binary-valued Bandit}
In this section, we study binary-valued bandits, a setting extensively studied
by~\citet{hanneke2023bandit}. We establish a necessary condition under
which every learning algorithm must incur \(\Omega(\sqrt{T})\) regret in this setting.


\begin{definition}[Good Region]
For a binary function \(f:\Pi\to \{0,1\}\), define its
good region as
$\operatorname{Good}(f)
:=
\left\{
\pi\in\Pi: f(\pi)=1
\right\}$.
\end{definition}


\begin{proposition}
For any function class $\mathcal{F}$, suppose there exist two binary-valued functions
$f_1,f_2 \in \mathcal{F}$ such that $\operatorname{Good}(f_1) \cap \operatorname{Good}(f_2)=\emptyset$, then any learning algorithm must incur regret $R(T)=\Omega(\sqrt{T})$.
\end{proposition}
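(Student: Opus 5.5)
The plan is a standard two-point lower bound using stochastic (hence oblivious) adversaries built from $f_1$ and $f_2$. Write $G_1=\operatorname{Good}(f_1)$ and $G_2=\operatorname{Good}(f_2)$; these sets are disjoint, and $\sup f_1=\sup f_2=1$ (assuming each good region is nonempty, which is the meaningful case). Fix $\varepsilon\in(0,\frac14)$, to be set to $\varepsilon\asymp T^{-1/2}$. We define two i.i.d.\ adversaries.

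\begin{itemize}
\item Under $\mathrm{Adv}_1$, each round independently picks $f_t=f_1$ with probability $\frac12+\varepsilon$ and $f_t=f_2$ otherwise.
\item $\mathrm{Adv}_2$ is defined symmetrically, with the roles of $f_1$ and $f_2$ swapped.
\end{itemize}

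The reward is $r_t\sim\mathrm{Bernoulli}(f_t(\pi_t))$, which has mean $f_t(\pi_t)$. Conditionally on the arm played, the reward under $\mathrm{Adv}_j$ is Bernoulli with mean $\bar f_j(\pi)$, where $\bar f_1=(\frac12+\varepsilon)f_1+(\frac12-\varepsilon)f_2$. By disjointness, $\bar f_1$ equals $\frac12+\varepsilon$ on $G_1$, equals $\frac12-\varepsilon$ on $G_2$, and is $0$ off $G_1\cup G_2$; symmetrically for $\bar f_2$.

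The first step lower-bounds regret by play counts. As in the proof of Theorem~\ref{lower_bound_learnability}, for a fixed comparator $\pi^*\in G_j$ the expected regret is at least $\mathbb E_j\sum_t(\bar f_j(\pi^*)-\bar f_j(\pi_t))$. This is at least $2\varepsilon\,\mathbb E_j[N_{\neg j}]$, where $N_{\neg j}$ counts rounds in which the learner plays outside $G_j$. Every arm outside $G_j$ loses at least $2\varepsilon$ relative to $\frac12+\varepsilon$, and arms outside both regions lose more.

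The second step is a change-of-measure argument. Arms outside $G_1\cup G_2$ produce $\mathrm{Bernoulli}(0)$ rewards under both adversaries. On $G_1\cup G_2$ the reward laws are $\mathrm{Bernoulli}(\frac12\pm\varepsilon)$, whose KL divergence is $O(\varepsilon^2)$. By the chain rule, $\KL(P_1\|P_2)\le c\,\varepsilon^2 T$ for the laws of the full history. Pinsker's inequality (or Bretagnolle--Huber) then gives
\[
\frac{N_{\neg1}}{T}\ \text{under }P_1 \quad\text{and}\quad \frac{N_{\neg2}}{T}\ \text{under }P_2
\]
satisfying $\mathbb E_1[N_{\neg 1}]+\mathbb E_2[N_{\neg2}]\ge T\,(1-\TV(P_1,P_2))$, since $N_{\neg1}+N_{\neg2}\ge T$ pointwise ($G_1,G_2$ are disjoint). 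Choosing $\varepsilon=c'/\sqrt T$ makes $\TV(P_1,P_2)\le\frac12$. One adversary then forces $\mathbb E[N_{\neg j}]\ge T/4$, giving regret at least $2\varepsilon\cdot T/4=\Omega(\sqrt T)$.

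The main obstacle is the gap between this stochastic regret and the paper's regret notion, which uses the realized sequence and $\mathbb E[\sup_{\pi^*}\sum_t(f_t(\pi^*)-f_t(\pi_t))]$. This causes no problem, because $\mathbb E\sup\ge\sup\mathbb E$ and we fixed $\pi^*\in G_j$. Since the adversaries are i.i.d.\ mixtures, they are oblivious, so the bound holds even against oblivious adversaries. The earlier coupling argument is unnecessary; only the KL computation needs care, namely that arms outside both good regions are uninformative and arms inside contribute KL $O(\varepsilon^2)$ per pull.
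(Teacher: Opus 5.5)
Your proposal is correct and is essentially the paper's proof: the same two i.i.d.\ $\frac12\pm\varepsilon$ mixtures of $f_1,f_2$, the same $2\varepsilon$ gap, and the same KL chain rule plus Pinsker bound with $\varepsilon\asymp T^{-1/2}$. Your differences are cosmetic: you bound the total variation of the full history once via the pointwise inequality $N_{\neg1}+N_{\neg2}\ge T$, whereas the paper bounds it round by round for a deterministic learner via Yao's principle. Your caveat that both good regions be nonempty is an assumption the paper's proof also uses implicitly (without it the claim can fail, e.g.\ for $\mathcal F=\{f_1\equiv 0,\ f_2\equiv 1\}$).
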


\begin{proof}
    By Yao's minimax principle, it suffices to consider an arbitrary deterministic learner. Choose a parameter $\Delta \in (0,1)$ to be set later. Define two environments over oblivious sequences $\{f_t\}_{t=1}^T$: Under $Q^{+}$, the adversary samples independently for each $t$,
    \[
f_t=
\begin{cases}
f_1, & \text{with probability } \frac{1}{2}+\Delta,\\[4pt]
f_2, & \text{with probability } \frac{1}{2}-\Delta.
\end{cases}
\]
Under $Q^{-}$, the adversary samples independently for each $t$,
        \[
f_t=
\begin{cases}
f_1, & \text{with probability } \frac{1}{2}-\Delta,\\[4pt]
f_2, & \text{with probability } \frac{1}{2}+\Delta.
\end{cases}
\]
Let $P_+$ and $P_{-}$ be the induced laws of the interaction history $\mathcal{H}$ under $Q^+$ and $Q^-$. For a binary function $f$, $\operatorname{Good}(f)=\{\pi: f(\pi)=1\}$. For simplicity, we use $S_1$ denote $\operatorname{Good}(f_1)$, and use $S_2$ to denote $\operatorname{Good}(f_2)$. Since $\operatorname{Good}(f_1) \cap \operatorname{Good}(f_2)=\emptyset$,  we have: $f_2(S_1)=0$ and $f_1(S_2)=0$. Then we consider two situations, under $Q^+$, the average mean reward for arm in $S_1$ is $(\frac{1}{2}+\Delta)\cdot 1+ (\frac{1}{2}-\Delta)\cdot 0=\frac{1}{2}+\Delta$. The average mean reward for arm in $S_2$ is $(\frac{1}{2}+\Delta)\cdot 0+ (\frac{1}{2}-\Delta)\cdot 1=\frac{1}{2}-\Delta$; under $Q^-$, the average mean reward for arm in $S_1$ is $(\frac{1}{2}-\Delta)\cdot 1+ (\frac{1}{2}+\Delta)\cdot 0=\frac{1}{2}-\Delta$. The average mean reward for arm in $S_2$ is $(\frac{1}{2}+\Delta)\cdot 1+ (\frac{1}{2}-\Delta)\cdot 0=\frac{1}{2}+\Delta$. Moreover, any arm outside \(S_1\cup S_2\) has expected reward \(0\) under both environments. Therefore, under \(Q^+\), every arm outside \(S_1\) is worse than an arm in \(S_1\) by at least \(2\Delta\). Similarly, under \(Q^-\), every arm in \(S_1\) is worse than an arm in \(S_2\) by exactly \(2\Delta\).

Define event $\mathcal{A}_t:=\{\pi_t \in S_1\}$ and   $\mathcal{A}_t^c:=\{\pi_t \notin S_1\}$. Then we have:
\[\mathbb{E}_{Q^+}[R_A(T)]\geq 2\Delta \sum_{t=1}^T P_+(\mathcal{A}_t^c)\]
\[\mathbb{E}_{Q^-}[R_A(T)]\geq 2\Delta \sum_{t=1}^T P_-(\mathcal{A}_t)\]
We add them together, get:
$$\mathbb{E}_{Q^+}[R_A(T)]+\mathbb{E}_{Q^-}[R_A(T)]\geq 2\Delta \sum_{t=1}^T (P_+(\mathcal{A}_t^c)+P_-(\mathcal{A}_t))$$
Next, we bound $P_+(\mathcal{A}_t^c)+P_-(\mathcal{A}_t)\geq 1-\TV(P_+^{t-1},P_-^{t-1})$ by TV distance, where $P_\pm^{t-1}$ denote the laws of $\mathcal{H}_{t-1}$ under $Q^\pm$. Then 
$$\mathbb{E}_{Q^+}[R_A(T)]+\mathbb{E}_{Q^-}[R_A(T)]\geq 2\Delta \sum_{t=1}^T (1-\TV(P_+^{t-1},P_-^{t-1}))$$
By Pinsker's inequality,
\[
\TV(P_+^{t-1},P_-^{t-1})
\le
\sqrt{\frac12 \KL(P_+^{t-1}\|P_-^{t-1})}
\]

For $\Delta\le \frac14$,
$$\KL\!\left(\frac12+\Delta\,\Big\|\,\frac12-\Delta\right)\le 16\Delta^2$$
By chain rule,
$$\KL(P_+^{t-1},P_-^{t-1})\leq (t-1)\KL\!\left(\frac12+\Delta\,\Big\|\,\frac12-\Delta\right) \leq (t-1)16 \Delta^2$$
$$\TV(P_+^{t-1},P_-^{t-1})\leq \sqrt{\frac{1}{2}(t-1)16\Delta^2 } \leq \Delta\sqrt{8T}$$
Therefore, we choose $\Delta=\frac{1}{8\sqrt{2T}}$,
$$\mathbb{E}_{Q^+}[R_A(T)]+\mathbb{E}_{Q^-}[R_A(T)]\geq 2\Delta \sum_{i=1}^T (1-\TV(P_+^{t-1},P_-^{t-1}))\geq \frac{3}{2}\Delta T \geq \frac{3}{16\sqrt{2}}\sqrt{T}$$
$$\max\{\mathbb{E}_{Q^+}[R_A(T)],\mathbb{E}_{Q^-}[R_A(T)]\}\geq \frac{3}{32\sqrt{2}}\sqrt{T}$$
    
\end{proof}

\end{document}